\title{Evaluating Agents using Social Choice Theory}
\keywords{Social choice, voting, general, agents, evaluation}
\author[1]{Marc Lanctot}
\author[1,2]{Kate Larson}
\author[1]{Yoram Bachrach}
\author[1]{Luke Marris}
\author[3]{Zun Li}
\author[1]{Avishkar Bhoopchand}
\author[1]{Thomas Anthony}
\author[4]{Brian Tanner}
\author[1]{Anna Koop}
\affil[1]{Google DeepMind}
\affil[2]{University of Waterloo}
\affil[3]{University of Michigan}
\affil[4]{Artificial.Agency}
\newcommand{\argmax}{\operatornamewithlimits{argmax}}
\newcommand{\cG}{\mathcal{G}}
\newcommand{\defword}[1]{\textbf{\boldmath{#1}}}
\newcommand{\ie}{{\it i.e.},~}  
\newcommand{\eg}{{\it e.g.},~}  
\newtheorem{theorem}{Theorem}
\newtheorem{lemma}{Lemma}
\newtheorem{definition}{Definition}
\begin{abstract}
We argue that many general evaluation problems can be viewed through the lens of voting theory. Each task is interpreted as a separate voter, which requires only ordinal rankings or pairwise comparisons of agents to produce an overall evaluation. By viewing the aggregator as a \emph{social welfare function}, we leverage centuries of research in social choice theory to derive principled evaluation frameworks with axiomatic foundations. These evaluations are interpretable and flexible, while avoiding many of the problems currently facing cross-task evaluation.
We apply this Voting-as-Evaluation (VasE) framework across multiple settings, including
reinforcement learning, large language models, and humans.
In practice, we observe that VasE can be more robust than popular evaluation frameworks (Elo and Nash averaging),
discovers properties in the evaluation data not evident from scores alone,
and can predict outcomes better than Elo in a complex seven-player game.
We identify one particular approach, maximal lotteries,
that satisfies important consistency properties relevant to evaluation, is computationally efficient (polynomial in the size of the evaluation data), and identifies game-theoretic cycles.
\end{abstract}
\begin{document}

\maketitle

\section{Introduction}

In this paper, we bridge two sub-disciplines of artificial intelligence: evaluation of general agents and computational social choice.
Evaluation is central to driving progress in artificial intelligence and machine learning. Without principled evaluation procedures it is challenging, if not impossible, for both researchers and the broader public to understand the strengths and limitations of different systems. 
This confusion is further exacerbated when considering the evaluation problem across multiple tasks, features and metrics.
For example, 
the Arcade Learning Environment (ALE)~\citep{bellemare13arcade} inspired the development of general agents via deep reinforcement learning (RL) approaches, but the breadth of options available for assessment gave rise to a diverse set of incompatible evaluation methodologies~\citep{machado18arcade}.
With the proliferation and accessibility of general environment suites
and increasing interaction with human agents, there is a need for robust and principled evaluation schemes for general agents.

\begin{figure}[t]
\begin{center}
\includegraphics[scale=1.2]{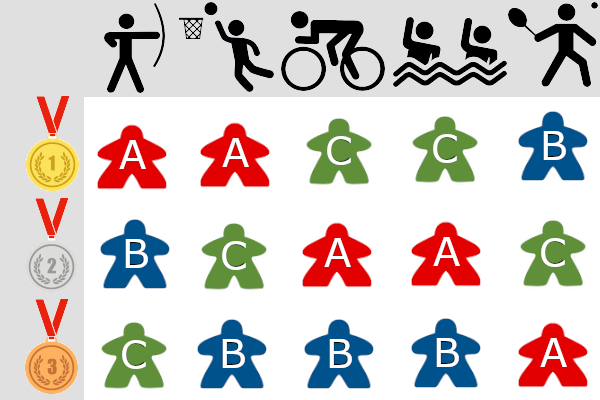}
\end{center}
\caption{An evaluation problem for general agents. Each event has its own separate metric for determining the ranking of each participant, with gold, silver, and bronze corresponding to first, second, and third place ranks.}
\label{fig:meeple_pentathlon}
\end{figure}
To ground our comparison of evaluations methods, we will use the pentathlon 
depicted in Figure~\ref{fig:meeple_pentathlon} where agents $A$, $B$, and $C$ compete in five different events: archery, basketball, cycling, synchronized swimming, and tennis. 
Each sport has its own specific evaluation criteria: \eg fastest time, most points, match outcome.
Every sport also awards its  top winners with gold, silver, and bronze medals.
Who should be considered the winner of the pentathlon?
Social choice theory provides a framework for answering this question 
when we consider the standings within each competitions as votes aggregated according to some voting scheme. 
Though the voting methods of social choice theory have long been extended beyond the field's political origins, they have yet to be widely used for the evaluation of machine learning agents.

How do researchers currently evaluate agents?
The most commonly used system to compare agents is the Elo rating system~\citep{Elo78}. Elo was designed originally to rank chess agents. 
It is still widely-used today in spite of several well-known deficiencies~\citep{Balduzzi18ReEval}. 
For example, it cannot represent non-transitive relationships between agents, as in the classic game of Rock, Paper, Scissors where rock beats scissors, scissors beats paper, and paper beats rock. 
Because of the cyclic relationship, Elo assigns the same rating to each agent, predicting that one beats the other with probability $\frac{1}{2}$. 
Elo can also be manipulated by cloning: the addition of multiple copies of similar agents can overturn rankings.
A technique such as Nash averaging~\citep{Balduzzi18ReEval} addresses the cloning issue, but raises others. 
Nash averaging is a game-theoretic approach that interprets evaluation data as a game between agent and task (or agent versus agent), ranking agents by their expected utility against a maximum-entropy Nash equilibrium.
We identify three problems with Nash averaging: 
it requires scores be directly comparable, it aggressively filters out tasks (due to selecting tasks adversarially), and can be highly sensitive to the set of included agents.
In contrast, social choice theory provides methods that overcome these problems in a principled manner.

We introduce {\bf Voting-as-Evaluation (VasE)}: an evaluation system for general agents based on computational social choice.
Here, by ``general agent'' we mean any agent (human or learned) that can perform multiple tasks.
In particular, the paper contains the following novel contributions:
\begin{itemize}
  \item Presentation of a framework (VasE) for adapting computational social choice to the evaluation of general agents.
  \item A characterization of relevant consistency properties
  and their effects on the evaluation of general agents.
  \item Analysis of the findings of VasE on three different evaluation domains: reinforcement learning, large language models, and human competitions.
\item An iterative variant of the maximal lotteries voting scheme~\citep{Fishburn84} that produces a full ranking of agents: 
    a game-theoretic method that repeatedly solves two-player zero-sum margin games as represented by the voter margin matrix, such as depicted in Figure~\ref{fig:meeple_pentathlon_N_M_mats}.
\end{itemize}

\noindent VasE provides several benefits over existing evaluation schemes. 
VasE does not require score normalization across tasks, only ordinal rankings. This makes it straightforward to apply, even across incomparable metrics. 
Task distribution weights are user-controlled rather than system-controlled. This gives the evaluator the flexibility to decide the relative importance of each task depending on the context.
The aggregate outcome is interpretable, as the consistency properties of voting methods are designed to satisfy specific practical axioms. 
These axioms also provide guarantees, such as robustness to clones.
Finally, compared to Nash averaging, VasE does not filter tasks, and can be more robust to changes in subsets of agents.

Through our analysis, maximal lotteries emerges as a particularly useful voting method. Maximal lotteries satisfies all of the consistency properties discussed in this paper, is computable in polynomial time in the size of the evaluation data, and identifies groups of game-theoretic cycles (non-transitivity) among agents.   

\begin{figure}[t]
\begin{center}
~~~~~\includegraphics[scale=0.25]{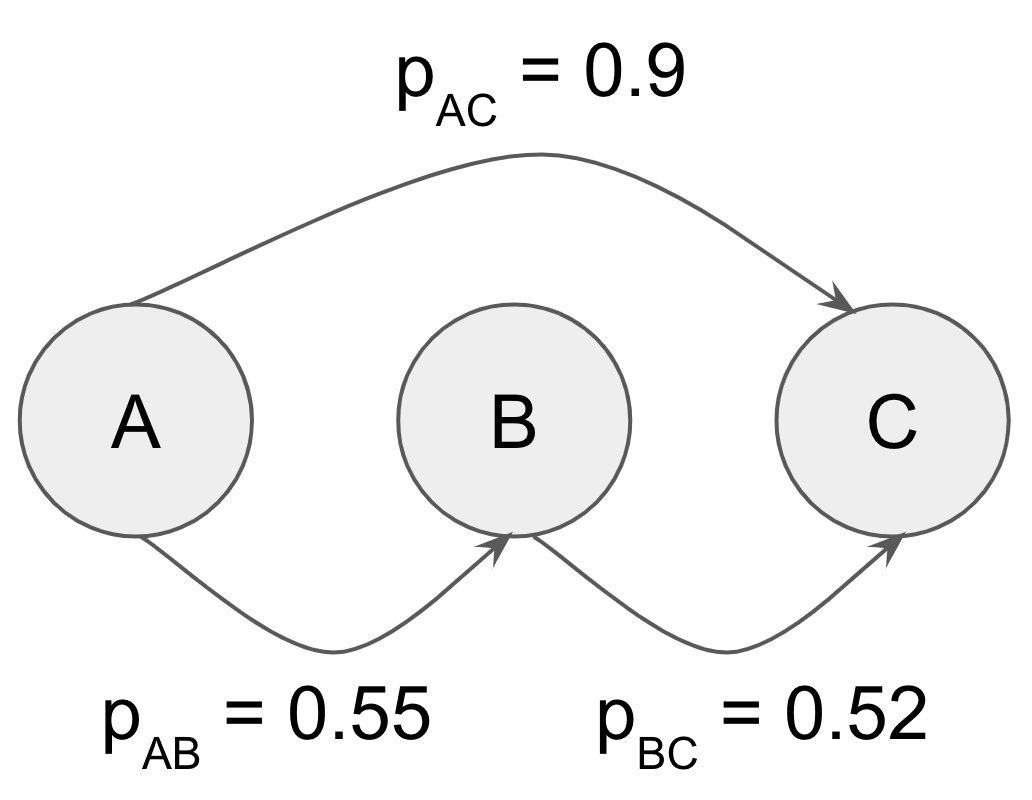}\\
\vspace{0.3cm}
\includegraphics[scale=0.35]{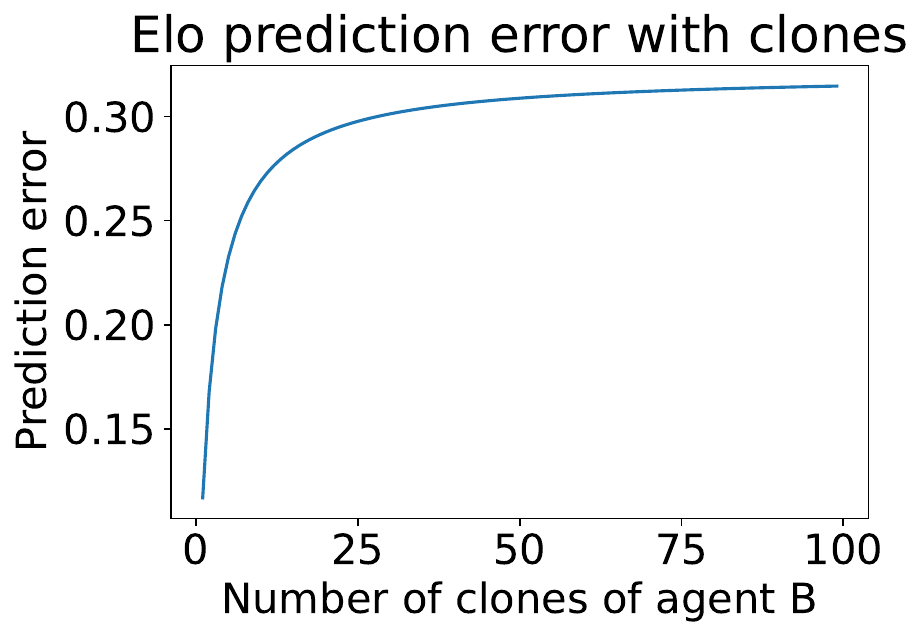}
\end{center}
\caption{Example effect of clones on Elo predictions. (Top) True win frequencies of a transitive relationship: $A$ beats $B$ beats $C$. (Bottom) Worst-case error of true $p_{AC}$ from Elo's predictions $\hat{p}_{AC}$ with clones of $B$.
\label{fig:elo-cloning}}
\end{figure}

Implementations of the voting methods used in VasE, example applications, and datasets can be found in OpenSpiel~\citep{LanctotEtAl2019OpenSpiel}\footnote{Specifically, in the following folder: \url{https://github.com/google-deepmind/open_spiel/tree/master/open_spiel/python/voting}.}.

\section{Evaluating Agents: Rating and Ranking}
\label{sec:bg-agent-eval}

Elo is a classic rating system that ranks skill using a simple logistic model learned from win/loss outcomes~\citep{Elo78}. 
A rating, $r_i$, is assigned to each player $i$ such that the probability of player $i$ beating player $j$ is predicted as $\hat{p}_{ij} = \frac{1}{1 + 10^{(r_j - r_i)/400}}$. 
While Elo was designed specifically to rank players in the two-player zero-sum, perfect information game of Chess, it has been widely applied to other domains\footnote{\url{https://en.wikipedia.org/wiki/Elo_rating_system\#Use_outside_of_chess}.} including in evaluation of large language models~\citep{zheng2023judging}.
TrueSkill~\citep{TrueSkill06} and bayeselo~\citep{Coulom05bayeselo} are rating systems based on similar foundations (Bradley-Terry models of skill) that also model uncertainty over ratings using Bayesian methods.

\subsubsection{Drawbacks of Elo}

Elo is simple, but it is well-known that it cannot model nontransitive relationships~\citep{Balduzzi18ReEval}. 
Elo predictions can be incorrect even on transitive games~\citep{Nihar17Simple,bertrand2023on}. 
Its inability to model nontransitivity is problematic since real-world games have large portions of the strategy space that are ``extremely non-transitive''~\citep{Czarnecki20RealWorld}. In practice, this transitivity assumption is violated in Go agents~\citep{Tuyls18Generalized} and in human play data in Chess~\citep{Sanjaya22Chess}, the very game Elo was designed for. 

\begin{figure}[t]
\centering
\includegraphics[scale=0.25]{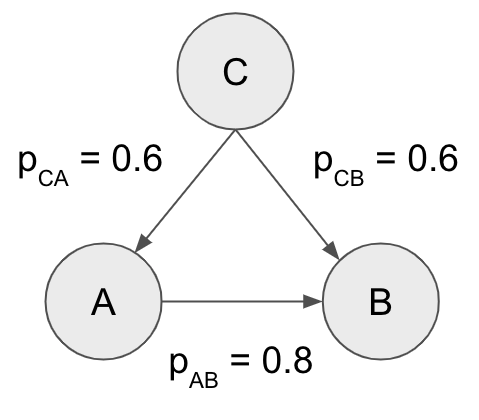}
\caption{Head-to-head win rates for agents in the pentathlon from Figure~\ref{fig:meeple_pentathlon}. Given these win rates, Elo assigns A and C the same rating.}
\label{fig:elo-meeple-winrates}
\end{figure}

Elo ratings can also be manipulated by cloned agents, which formally are agents that perform similarly (though not necessarily identically) to each other.
This is illustrated in  Figure~\ref{fig:elo-cloning}.
Denote $r_A$ as player $A$'s Elo rating and  $p_{AB}$ is the true probability (win rate) of agent $A$ beating $B$. 
Suppose there are three agents with true win rates of $p_{AB} = 0.55, p_{BC} = 0.52, \mbox{ and } p_{AC} = 0.9$.
Let $\vec{e} = ( p_{ij} - \hat{p}_{ij} )_{(i, j)}$ be a vector of differences between true win rates (ground truth) and Elo's predictions, and the overall prediction error as $|\vec{e}|_{\infty} = \max_{i,j}|e_{ij}|$.
Elo's prediction error can very quickly rise as clones are introduced. E.g. if there are 10 copies of $B$, then Elo predicts $r_A - r_C = 98$ and there is a significant error: $\hat{p}_{AC} = 0.631$ when the ground truth $p_{AC} = 0.9$.

The win rates for our pentathlon are shown in Figure~\ref{fig:elo-meeple-winrates}, which illustrates another surprising situation for Elo. 
Suppose we use Elo to rank agents from the example in Figure~\ref{fig:meeple_pentathlon}, where every instance of $x$ being ranked above $y$ counts as a win for $x$ and loss for $y$. 
In this example, despite agent $C$ winning the majority of the time against both $A$ and $B$, and $A$ only beating one of the other agents, Elo assigns the same rating to $C$ and $A$. In the appendix, we formally show that Elo's update rule leads to the same magnitude of the gradient components for $r_A$ and $r_C$ on all steps.

\subsubsection{Game-Theoretic Invariant Evaluation}

\citet{Balduzzi18ReEval} use the Schur decomposition to separate the problem of agent comparison into cyclic and transitive components, proposing multi-dimensional Elo (mElo) and Nash averaging. 
In the Agent-vs-Task (AvT) setting, they use a matrix $S$ with $m$ rows (agents) and $n$ columns (tasks) with values corresponding to scores (\eg \emph{normalized} average reward) for each agent in each task. In the Agent-vs-Agent (AvA) setting (\eg Chess), both rows and columns are agents and payoffs are expected utility for each agent. 
Agents are ranked by their expected values against a maximum-entropy Nash equilibrium strategy of a two-player zero-sum game.
mElo can model nontransitive relationships but can be manipulated by adding copies of agents (i.e. it is not clone-proof).

Nash averaging achieves clone-proofness for both agents and tasks due to properties of Nash equilibria. 
However, these properties allow partial support to be assigned to the task distribution, which we call {\it adversarial task selection}.
As a result, agents may be assessed over just a few tasks despite a wealth of available data. 

\citet{Balduzzi18ReEval} argue that an evaluation method should also be invariant to redundant copies of tasks.
This task invariance property is, however, highly constraining.
We show some counter-intuitive effects arising from it in our empirical evaluation section.
Furthermore, being invariant to copies of tasks means there is no mechanism to weight the importance of tasks, which is often be a useful feature. 
Thus, we do not address frameworks that are invariant in this sense.

Other game-theoretic solution concepts can be used in place of Nash equilibrium: for example, correlated equilibrium in the $n$-player general AvA setting~\citep{marris2022game}. Jordan et al. propose an evaluation scheme for general RL agents~\citep{Jordan20Evaluating} using  $\alpha$-Rank~\citep{Omidshafiei19AlphaRank} in a similar game between agents and tasks: again, this requires score normalization.

\section{Voting as Evaluation of General Agents}
\label{sec:vase}

We propose viewing the evaluation of AI agents through the lens of social choice. 
There is a natural mapping of evaluation problems into voting schemes, and by carefully selecting appropriate voting schemes, evaluation procedures inherit properties that are desirable within the general evaluation context. We title this framework \defword{Voting as Evaluation} (VasE), and provide a brief introduction to formal concepts in the following section. More technical details are provided in the appendix.

\subsection{Social Choice Theory Basics}
\label{sec:bg-soccho}

\begin{figure}[t]
\begin{center}
\begin{tabular}{ccc}
\includegraphics[scale=1]{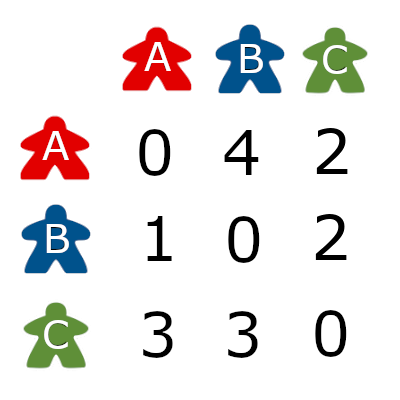} & ~ &
\includegraphics[scale=1]{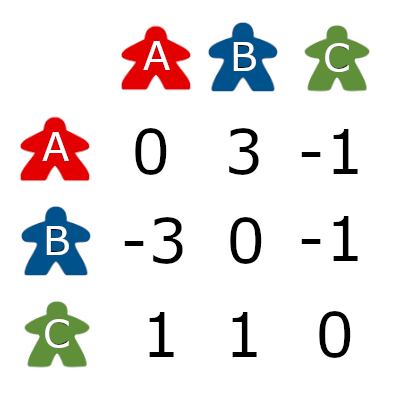} \\
\end{tabular}
\end{center}
\caption{(Left) The voter preference matrix $N(x, y)$ shows the number of events (votes) in which the agent on row $x$ is preferred to the agent on column $y$, for the example in Figure~\ref{fig:meeple_pentathlon}. 
(Right) The voter margin matrix whose entries are $M(x,y) = \delta(x,y) = N(x,y) - N(y,x)$.}
\label{fig:meeple_pentathlon_N_M_mats}
\end{figure}

A \defword{voting scheme} is defined as $\langle A, V, f\rangle$ 
where $A=\{a_1,\ldots, a_m\}$ is the set of \defword{alternatives},  
$V=\{v_1,\ldots, v_n\}$ is the set of \defword{voters}, 
and $f$ is the \defword{voting rule} that determines how votes are aggregated.
Voters have \defword{preferences} over alternatives: $a_1\succ_{v_i} a_2$ indicates voter $v_i$ strictly prefers alternative $a_1$ over alternative $a_2$, and $a_1\succeq_{v_i} a_2$ indicates the voter has a weak preference. 
These preferences induce (nonstrict) total orders over alternatives, which we denote by $\mathcal{L}$.  
A \defword{preference profile}, $[\succeq]\in \mathcal{L}^n$, is a vector specifying the preferences of each voter in $V$. 
It can be useful to summarize the preference profile in a \defword{voter preference matrix} $N$ or \defword{vote margin matrix} $M$.
The preference count $N(x,y)$, $x, y \in A$ is the number of voters in $[\succeq]$ that strictly prefer $x$ to $y$. 
The vote margin is the difference in preference count: $\delta(x, y) = N(x,y) - N(y,x)$. Both $N$ and $M$ matrices for the pentathlon example are shown in Figure~\ref{fig:meeple_pentathlon_N_M_mats}.

The central problem of social choice theory is how to aggregate preferences of a population so as to reach some collective decision.
A voting rule that determines the ``winner'' (a non-empty subset, possibly with ties), is a \defword{social choice function} (SCF). A voting rule that returns an aggregate ranking over all the alternatives is a \defword{social welfare function} (SWF).
Much of the social choice literature focuses on understanding what properties different voting rules support.

The \defword{Condorcet winner} formally defines a fairly intuitive concept: $A$ is the (strong) Condorcet winner if the number of votes where $A$ is ranked more highly than $B$ is greater than vice versa for all other alternatives $B$.
A weak Condorcet winner wins or ties in every head-to-head pairing.
Formally, given a  preference profile, $[\succeq]$, 
a weak Condorcet winner is an alternative $a^* \in A$ such that $\forall a' \in A, \delta(a^*, a') \ge 0$. 
If the inequality is strict for all pairs except  $(a^*, a^*)$ then we call it a strong Condorcet winner.
While many have argued that this definition captures the essence of the correct collective choice~\citep{deCondorcet1785}, 
in practice many preference profiles have no Condorcet winner. 

Condorcet consistent voting schemes are those that return a Condorcet winner when it exists, but differ on how they handle settings with no Condorcet winners.  We compare four deterministic Condorcet methods:
Kemeny-Young~\citep{Kemeny59,young1978consistent},
Copeland~\citep{Copeland50} due to its efficiency, and
ranked pairs~\citep{Tideman87} and Schulze~\citep{Schulze11} due to their clone-proofness.
See the appendix for formal definitions.

Scoring rules define another class of voting rules.  Scoring rules translate voter rankings to 
a score vector $\mathbf{w}=(w_1,\ldots, w_m)$ where $w_1\geq w_2\geq \ldots \geq w_m$ and $w_1>w_m$. 
The overall score for each alternative is computed by summing across all associated scores. 
Alternatives are ranked by their scores with the ``winning'' alternative being the one with the highest score.  Many familiar voting schemes belong to this class of voting rules, including plurality, Borda, and $k$-approval voting~\citep{Brandt16Handbook}.

All scoring rules satisfy a property called \defword{population consistency}~\citep{smith1973aggregation,young1975social}.
This  property  requires that when two disjoint sets of voters agree on some winning alternative $a^*$, merging the populations does not change the result. Scoring rules are not, however, Condorcet consistent. More broadly, population consistency and Condorcet consistency are incompatible~\citep{young1978consistent} for all deterministic voting schemes. Thus anyone selecting a voting scheme to use must make an explicit decision as to which property they value.

\subsubsection{Probabilistic Voting: Maximal Lotteries}

Probabilistic social choice functions (PSCFs)~\citep{Endriss17Trends} have emerged to address these impossibility results.
Given a preference profile, a PSCF returns a \emph{distribution} over alternatives (a lottery). One particular PSCF is the \defword{maximal lottery}~\citep{Krewaras65, Fishburn84}. 
A maximal lottery,  
$p \in \Delta(A)$, is one that is (weakly) preferred to any other lottery: namely $p^T M q \ge 0, \forall q \in \Delta(A)$, where $M$ is the voter margin matrix,
$M_{x,y}=(\delta(x,y))_{x,y\in A}$.
Equivalently, one can view $M$ as the payoffs of a carefully constructed symmetric zero-sum \defword{margin game} where the payoffs are win or loss magnitudes of different pairwise comparisons. The  maximal lottery is, thus,  the mixed maximin (or Nash equilibrium) solution to the game, and can be computed via linear programming.

Maximal lotteries (ML) exhibit a number of interesting properties.
First, they require little structure to be placed on  voters' preferences since $M$ is computed solely using pair-wise comparisons.\footnote{
In particular, it does not require asymmetry, completeness or even transitivity in the individual voters' preferences. }
Second, it is Condorcet-consistent, in that alternatives in the support of the maximal lottery are the Condorcet winners.  Furthermore, ML is population consistent and guarantees composition consistency (a stronger form of clone-proofness)~\citep{brandl2016consistent}. 

\begin{algorithm}[t]
\DontPrintSemicolon 
\KwIn{An initial set $A_0$ of alternatives}
\KwOut{A ranking of winner sets, $W$, and their corresponding probability distributions}
Let $W \gets ()$ be an empty sequence \;
\For{round $t \in \{0, 1, 2, \cdots, m-1\}$}{
    \If{$|A_t| = 0$}{
        {\bf break} \;
    }
    $x_t \gets \textsc{MaximalLotteries}(A_t)$ \;
    $w_t \gets \{ a \in A_t: x_t(a) > 0 \}$ \;
    Append $(x_t, w_t)$ to the end of $W$ \;
    $A_{t+1} = A_t - w_t$ \; \label{alg:reduce-alts}
    Remove the winners $w_t$ from all the votes. \; \label{alg:mod-votes}
}
\Return{$W$} \;
\caption{Iterative Maximal Lotteries (IML)}
\label{alg:iterative-ml-mainpaper}
\end{algorithm}

We propose Iterative Maximal Lotteries (IML), depicted in Algorithm~\ref{alg:iterative-ml-mainpaper}, which repeatedly removes winners and solves the margin sub-games among the remaining agents. As shown in our evaluation, IML can identify game-theoretic cycles among subsets of agents. Details about this iterative approach can be found in the appendix.

\subsection{Voting-as-Evaluation (VasE)}
\label{sec:vase_intro}
We now present our Voting-as-Evaluation framework, which, we argue, provides a principled framework for general agent evaluation.
In VasE, we define $A$ to be the set of \emph{agents}  to be evaluated. 
\emph{Voters} are different tasks or environments on which agents are being evaluated (AvT setting) or different interactions/games played (AvA setting).

We assume each voter has their own way of evaluating the performance of agents \emph{given its context}. This can be in terms of accumulated reward if considering RL agents or performance according to some particular metric. 
Figure~\ref{fig:meeple_pentathlon} provides an example where, for example, cycling is evaluated by speed, basketball is evaluated by number of baskets scored, and synchronized swimming is evaluated on a combination of technical merit and artistic expression.
Given their individual evaluation context,  voters  either generate a ranking of agents or answer pair-wise comparison queries (i.e. is agent $a$ better than agent $b$). We interpret this as the \emph{preferences} of the voters. Importantly, we do not assume that cross-task or cross game/interaction comparisons are possible, nor that all voters are using   comparable performance metrics. 
Finally, to compare agents  across multiple tasks or domains we require an aggregator that takes the ``preferences" of the different tasks and returns a collective decision. Voting rules do exactly this. Thus the evaluation problem reduces  to the problem of selecting an appropriate voting rule where  appropriateness is defined by what properties the evaluator wishes to support.
We observe that several of the consistency properties from social choice can be interpreted as desirable evaluation-framework properties. 
\begin{description}
\item[{\bf Condorcet consistency:}]  If an agent outperforms every other agent in terms of head-to-head comparisons,
Condorcet consistent voting schemes will identify this. 

\item[{\bf Population consistency:}]  When evaluating on subsets of tasks, population consistency guarantees that consensus on the subsets is conclusive.

\item[{\bf Clone consistency:}] 
The addition of near duplicates, either deliberately or arising from testing variations in architecture and parameterization, should not change the ranking.

\begin{table*}[t]
    \centering
    \begin{tabular}{l||c|c|c|c||c|c|c|c||c|}
               & App & Bor & Plu & STV & Cop & RPs & Kem & Sch & ML \\
    \hline
    Condorcet consistency  & & & & & $\checkmark$ & $\checkmark$ & $\checkmark$ & $\checkmark$ & $\checkmark$ \\
    Clone consistency & & & & & & $\checkmark$ & & $\checkmark$ & $\checkmark$ \\
    Population consistency & $\checkmark$ & $\checkmark$ & $\checkmark$ & & & & & & $\checkmark$ \\
    Complexity & $m$ & $m$ & $m$ & $m^2$ & $m^2$ & $m^3$ & $m!$ & $m^3$ & $\mbox{poly}(m^2)$\\
    \hline
    \end{tabular}
    \caption{Summary of properties of each voting method used in our evaluation.
    Note that the complexity of ML depends on that of the underlying equilibrium solver~\citep{Cohen19LPs}).
    In addition to the above, maximal lotteries also satisfies two additional properties (independence of irrelevant alternatives and agenda consistency).
    See appendix for details.}
    \label{tab:voting_method_summary}
\end{table*}
\end{description}

\begin{table*}[t!]
\centering
\begin{tabular}{|c|cccc|}
\hline
 & A & B & C & D \\
\hline
1st & \textsc{rainbow}  & \textsc{rainbow}   & \textsc{rainbow}   & \textsc{rainbow}\\
2nd & \textsc{dist-dqn} & \textsc{dist-dqn}  & \textsc{a3c}       & \textsc{dist-dqn}\\
3rd & \textsc{prio-ddqn}  & \textsc{prio-ddqn}   & \textsc{dist-dqn}  & \textsc{a3c}\\
4th & \textsc{a3c}      & \textsc{duel}      & \textsc{prio-ddqn} & \textsc{prio-ddqn}\\
5th & \textsc{duel}     & \textsc{a3c}       & \textsc{duel}      & \textsc{duel}\\
6th & \textsc{ddqn}     & \textsc{ddqn}      & \textsc{ddqn}      & \textsc{ddqn}\\
7th & \textsc{noisy-dqn}  & \textsc{noisy-dqn}   & \textsc{noisy-dqn}   & \textsc{noisy-dqn}\\
8th & \textsc{dqn}      & \textsc{dqn}       & \textsc{dqn} & \textsc{dqn}\\
\hline
\end{tabular}
\caption{Rankings found by VasE using A) Approval($k=3)$, Kemeny-Young, IML, Ranked Pairs, and Schulze, B) Borda and Copeland, C) Plurality, and D) STV. \label{tab:ale_rankings}}
\end{table*}

Since much of social choice theory is axiomatic in nature, these properties (and others) are clearly defined, supporting interpretability~\citep{peters2020explainable,procaccia2019axioms}.  
Furthermore,  its is known when certain properties are incompatible. 
This forces the evaluator to clearly articulate the trade-offs made when selecting one voting scheme over another, supporting transparency in the evaluation process. 
Consider the pentathlon of  Fig.~\ref{fig:meeple_pentathlon}, with corresponding $N(x,y)$ and $M(x,y)$ (Fig.~\ref{fig:meeple_pentathlon_N_M_mats}). If the evaluator places great importance on Condorcet consistency, they must select a rule guaranteed to return Agent $C$ as the best agent (the strong Condorcet winner). 
If, instead, the evaluator prefers population consistency, they should select a scoring rule like plurality or Borda, both of which return Agents A and C as (tied) top agents. 

\section{Empirical Evaluation}
\label{sec:eval}

To evaluate VasE, we present results of several applications, which together illustrate desirable features of our methodology. Due to space limitations, full results (include associated scores for every ranking) are deferred to the Appendix.
We use nine different voting schemes: four scoring methods (approval, Borda, plurality, Single Transferable Vote (STV))~\citep{Brandt16Handbook}, four deterministic Condorcet methods (Copeland~\citep{Copeland50}, ranked pairs~\citep{Tideman87}, Kemeny-Young~\citep{Kemeny59,young1978consistent}, Schulze~\citep{Schulze11}) and one probabilistic (Condorcet) method: maximal lotteries~\citep{Fishburn84,brandl2016consistent}.
The properties guaranteed are summarized in Table~\ref{tab:voting_method_summary}.

\subsection{General Reinforcement Learning Agents}
\label{sec:ale}

We start with an application to the Arcade Learning Environment (ALE), using the mean performance of eight agents across 54 Atari games from~\citep[Table 5]{Hessel17Rainbow}. For VasE, each vote (preference ranking) is determined by decreasing order of score, for a total of 54 complete preference rankings, one for each game. The aggregate rankings (total orders over alternatives) for all nine voting methods are shown in Table~\ref{tab:ale_rankings}.
Note that all nine methods top-rank Rainbow, which is also the strong Condorcet winner.

{\it Observation 1: ALE votes are unique.} To our surprise, despite most agents being variants of DQN, every ordering of agents is unique across 54 tasks: there are no repeated votes. The histogram of unique votes is an easy way to quantify the diversity of the task set (with respect to a set of agents) with an $O(n^2)$ enumeration over all pairs of votes.

{\it Observation 2: Only A3C changes relative ranks across the voting methods}.
This is evident from the VasE comparison, but is not be immediately obvious from inspection of~\citep[Table 5]{Hessel17Rainbow} nor from the aggregate Elo ratings. 
This could be due to A3C being the only policy gradient (a fundamentally different category of RL algorithm than DQN and its variants) agent, but suggests further investigation is warranted.
Also, A3C has many first-place wins and so is ranked second by plurality and third by STV due to their placing large importance on first-place wins.

{\it Observation 3: The alternate evaluation protocols return identical rankings.}
ALE is deterministic and so generating unique episodes for comparison requires explicit variation. Hessel et al consider two forms of this: initial periods of forced no-op actions, or taking initial actions taken from human players.
Running VasE on the alternative evaluation protocol in~\citep[Table 6]{Hessel17Rainbow} results in identical rankings across all nine voting methods. 
Thus, the different protocols had insignificant impact on the final assessments.

{\it Observation 4: Nash averaging requires score normalization.}
Recall that Nash averaging~\citep{Balduzzi18ReEval} finds an equilibrium in the Agent-versus-Task matrix induced by~\citep[Table 5]{Hessel17Rainbow}.
If we were to run Nash averaging on raw scores, the task player would place full weight on one task ({\tt skiing}) in its equilibrium because the rewards are large and negative ($<-10000$) and outside the bounds of the other games, across all agents.
To avoid this pathology, our Nash Averaging implementation
first normalizes scores into the range of $[0,1]$ as in previous work~\citep{Balduzzi18ReEval}.
In contrast, VasE compares agents' raw scores only within the same task, without requiring normalization between different tasks.

{\it Observation 5: Nash averaging has adversarial task selection in ALE.}
With normalization, Nash averaging finds a four-way tie for first place \{\textsc{a3c}, \textsc{dist-dqn}, \textsc{rainbow}, \textsc{duel}\}, with a task distribution of $(\texttt{assault}\text{:}~0.36,~\allowbreak \texttt{boxing}\text{:}~0.29,~\allowbreak \texttt{breakout}\text{:}~0.26,~\allowbreak \texttt{venture}\text{:}~0.09)$. 
The consequence of the approach is clear: first, Nash equilibrium guarantees that the top agents in the support have the same value ($0.3891$ in this case) making them indistinguishable under this metric.
Second, due to adversarial task selection, only four (7.4\% of) tasks are used to rank agents. 

{\it Observation 6: Nash averaging can be sensitive to the set of agents.}
Finally, we augment the data by adding scores of a random agent and human performance reported in~\citep[Section H.4]{Badia20Agent57}, resulting in 10 agents over 54 games. In VasE, the relative ranks of the original 8 agents remain almost identical to their counterparts in Table~\ref{tab:ale_rankings}. Rainbow remains a strong Condorcet winner, random is consistently bottom-ranked, and the position of human differs by voting method. 
Nash averaging now finds a different four-way tie: \{\textsc{a3c}, \textsc{dist-dqn}, \textsc{rainbow}, \textsc{human}\} (all with value $0.4253$) and a significantly different task distribution: $(\texttt{asterix}\text{:}~0.032,~\allowbreak \texttt{breakout}\text{:}~0.39,~\allowbreak \texttt{gopher}\text{:}~0.17,~\allowbreak \texttt{montezuma\_revenge}\text{:}~0.40)$.
Note that again Nash averaging aggressively filters 93.6\% of the data and shares only one task in its support with the 8-agent dataset.

In the appendix, we show results on $\alpha$-Rank in the ALE, compute the Elo scores for agents (Rainbow is the top-ranked agent by Elo) and apply VasE on general adaptive RL agents~\citep{AdA23}.

\subsection{Ranking Large Language Models as Agents}
\label{sec:eval_of_llms}

In this section, we apply VasE to the AgentBench benchmark evaluation data~\citep{liu2023agentbench}, that assesses the performance of 25 large language models (LLMs), including both APIs and open-source models, as agents across 8 different environments (tasks). The tasks involve operating system and database use, using a knowledge graph to answer questions, playing a digital card game, solving lateral thinking puzzles, house-holding, and browsing and shopping on the web. As in the ALE, VasE builds a preference profile from the raw task scores from \citep[Table 3]{liu2023agentbench}, without any form of score normalization. In contrast, AgentBench defines an overall score by weighting each agent's task score by the reciprocal of the task's mean score.

VasE rankings are mostly consistent with the AgentBench overall ranking: {\tt gpt-4} is the strong Condorcet winner, which is also first place in all the VasE rankings.

{\it Observation 7: Iterative Maximal Lotteries (IML) can identify qualitatively different classes of agents.} 
Table~\ref{tab:iml-agentbench} shows the top 8 ranks and scores by iterative maximal lotteries on AgentBench.
The scoring system for IML is interpretable as ``levels'', where each level corresponds to a set of agents that were selected as winners in the margin subgame at each iteration, and an agent's score is equal to $l - 1 + w_a$, where $l$ is the level they achieved and $w_{t,a}$ is the probability they were assigned in the equilibrium of the game. For example, in the first iteration (out of 17 total) {\tt gpt-4} is the only winner (selected with probability 1),
so it achieves a score of $17 - 1 + 1 = 17$.

\begin{table}[t]
\begin{center}
\begin{tabular}{|c|ll|}
\hline
Rank & Agent & Score\\
\hline
1 & {\tt gpt-4} & 17.00\\
2 & {\tt claude} & 16.00\\
3 & {\tt gpt-3.5-turbo} & 15.00\\
4 & {\tt text-davinci-003} & 14.00\\
5 & {\tt claude-instant} & 13.00\\
6 & {\tt text-davinci-002} & 12.00\\
7 & {\tt chatglm} & 10.56\\
8 & {\tt t-bison-001} & 10.43\\
\hline
\end{tabular}
\end{center}
\caption{Top 8 ranked agents using iterative maximal lotteries. Integer score differences between ranks represent Condorcet winners in the margin subgame that was solved. Each agent in the top six was chosen with probability 1, whereas the bottom two were part of Nash equilibrium distribution assigning probability 0.56 to {\tt chatglm} and 0.43 to {\tt t-bison-001}. \label{tab:iml-agentbench}}
\end{table}

This scoring system conveys qualitative information about results of each iteration. The top six agents have integer scores: they were Condorcet winners of their respective margin subgames. In other words, the higher-level agents dominate the lower, 
in terms of performing better on a greater number of tasks.
Looking only at the raw data in~\citep[Table 3]{liu2023agentbench}, {\tt text-davinci-002} and {\tt t-bison-001} seem similar: their overall scores differ only by 0.07 and the scores across tasks are comparable. 
However, Table~\ref{tab:iml-agentbench} above shows these agents perform at different levels. 
This difference is confirmed by looking at the margin subgame between the bottom three agents: {\tt text-davinci-002} is a Condorcet winner of the margin subgame which excludes the top six agents.
Though neither Elo nor the aggregate score make it obvious, {\tt text-davinci-002} is a stronger performer on these tasks than {\tt text-bison-00l} and {\tt chatglm}.

In the appendix, we show another game-theoretic cycle found by IML. It also provides example visualizations for greater interpretability on the Chatbot Arena~\citep{zheng2023judging} dataset.

{\it Observation 8: Nash averaging suffers from adversarial task selection in AgentBench.} We run Nash averaging on AgentBench in a similar way as on ALE, by normalizing the scores into $[0, 1]$. As in ALE, Nash averaging finds a low-support equilibrium, with a task distribution of 
$(\texttt{HouseHolding}\text{:}~0.135,\allowbreak \texttt{WebShopping}\text{:}~0.865)$.
Again, a large portion ($\frac{6}{8} = 75\%$) of the tasks are not used in ranking the agents. The task player places most weight on the only task that {\tt gpt-4} does not place 1st. Under this distribution, there is a two-way tie for top rank between {\tt gpt-4} and {\tt gpt-3.5-turbo} with a Nash average value of 0.889. 

See the appendix for further results using VasE for LLM evaluation. There, we apply VasE to a prompt engineering for text summarization task and to the Holistic Evaluation of Language Models (HELM)~\citep{liang2022holistic} data.

\subsection{Ranking Human Players in Diplomacy}
\label{sec:eval-ava}

In the AvA setting evaluation systems should be predictive of individual games/matchups. In this subsection, we measure how well voting methods can predict the rank of human players in the game of Diplomacy, a 7-player mixed-motive game. Diplomacy involves negotiation, as well as the repeated formation and betrayal of alliances, in an effort to be the first to control 18 supply centers. 
VasE can be applied here by interpreting each 7-agent evaluation task (one game played between 7 players) as a ranking over those $7$ agents (\ie a single vote). 
The simplest case is to rank agents by their final outcomes (the terminal payoffs) over each game. 
We use a dataset of anonymized human games played on webDiplomacy~\citep{webdip} between 2008 and 2019.
This dataset contains $n = 31049$ seven-player games played by $m = 52958$ players, and 
has also been used to train and evaluate Diplomacy RL agents~\citep{Paquette19,Anthony20Dip,Gray20Humanlevel,Cicero}.
Note that a payoff tensor of the meta-game would require $7 \cdot 52958^7 \approx 10^{32}$ entries, which is infeasible to solve using Nash averaging, $\alpha$-rank, or correlated equilibrium based ranking. Even the much smaller game solved by maximal lotteries would have $52958^2 \approx 2.8$ million entries.

We run VasE on this dataset and compare it to Elo (a slight variant of which is used by webDiplomacy~\citep{AnthonyGhostRatings}).
The number of supply centers at the end of the game determines the ordering of players for that game.
In VasE, this leads to a number of voters equal to the number of games in the dataset.
In Elo, each end-of-game rank contributes ${7 \choose 2} = 21$ head-to-head match outcomes between players $(i,j)$, with a win for $i$ over $j$ represented as player $i$ achieving more supply centers than player $j$.

\begin{figure}[t]
\includegraphics[width=1.0\linewidth]{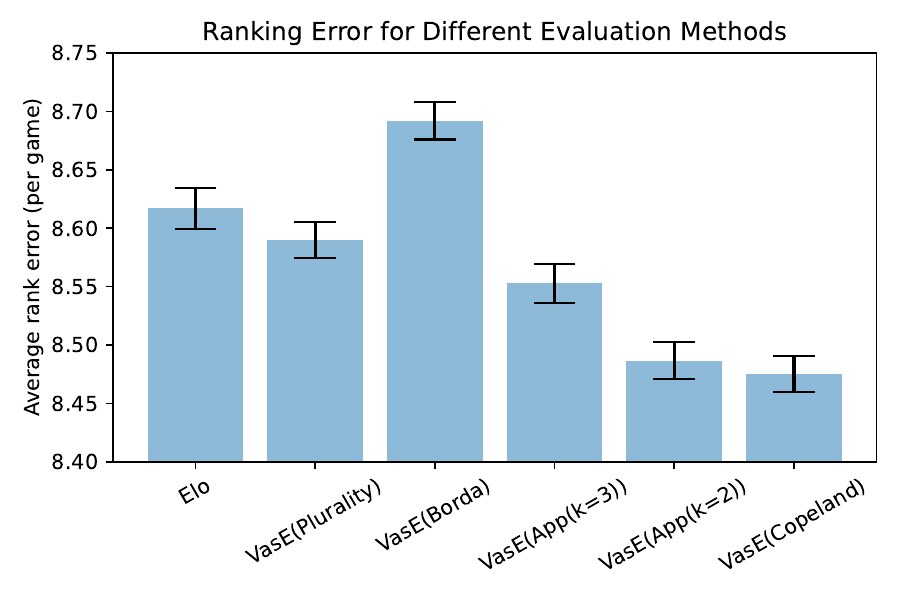} 
\caption{Average ranking error across 50 splits. Error bars represent 95\% confidence intervals.}
\label{fig:vase_webdip}
\end{figure}

Given a set of games $G$, an evaluation system returns a ranking, $R(G)$, over the agents. We compare this evaluation ranking to the actual agent-ranking for some specific game instance, which we denote by $g$. This allows us to infer how accurate our methods are in capturing the true agent-ranking.  To this end, we use the Kendall-tau metric, $K_d(R(G),g)$, which counts the number of pairwise disagreements between two rankings, where the smaller the distance the more similar two rankings are~\citep{kendall1938new}. 
We randomly generate 50 train/test splits, $(\mathcal{D}_R, \mathcal{D}_T)$, each with $|\mathcal{D}_R| = 28049$ and $|\mathcal{D}_T| = 3000$.
Each evaluation computes a player rank $R(\mathcal{D}_R)$ using $\mathcal{D}_R$, and incurs an average test error per game
$\textsc{ErrorPerGame}(\mathcal{D}_T, R(\mathcal{D}_R))= \frac{1}{|\mathcal{D}_T|} 
\sum_{g \in \mathcal{D}_T} K_d(R(\mathcal{D}_R), g)$. 
Results are shown in Fig.~\ref{fig:vase_webdip}.

{\it Observation 9: Some voting methods generalize better than Elo on outcome prediction in Diplomacy.}
Of the voting methods, we observe that approval voting with $k=3$ and $k=2$ and Copeland have significantly lower error than Elo. Plurality has slightly lower error (not significant), and Borda significantly higher. Copeland (the only Condorcet method) performed best: this is interesting, as Copeland bases its score directly on pairwise comparisons in the same way as Elo does in this 7-player game. Unlike Elo, Copeland scores are free from satisfying head-to-head win prediction semantics. 
This freedom might be why Copeland provides better generality than Elo in this complex 7-player negotiation task.

\section{Conclusion and Future Work}
\label{sec:conc_future_work}

In a recent paper  ``Rethink reporting of evaluation results in AI’’, the authors proposed  guidelines for robust evaluation practices, the first of which included the statement ``\emph{Aggregation decisions should be clearly explained, and analyses conducted to explore system performance should be described}”~\citep{burnell2023rethink}. Our framework, Voting-as-Evaluation (VasE), does precisely this. By leveraging key insights from social choice theory and adapting them for the general-agent evaluation problem, we provide explainable general-purpose evaluation schemes~\citep{peters2020explainable,procaccia2019axioms}.  Through extensive experiments across multiple domains 
we illustrated the flexibility of our approach, and showed how VasE addresses challenges that alternative evaluation schemes (Elo and Nash Averaging) encounter. We observed that: data from evaluation problems is rich, as 
 illustrated by the diversity of agent rankings across tasks; that 
 game-theoretic approaches to evaluation place constraints that may lead to counter-intuitive findings, such as often using only a small subset of the evaluation data; that VasE provides a principled way of doing general evaluation using diverse, possibly non-comparable metrics across many scenarios; and that VasE naturally extends to the  AvA setting, and can generalize better when predicting human per-game player rankings in a seven-player game than a standard rating system (Elo).
 
 The flexibility of VasE is one of its strengths. The evaluator has a choice of voting rules at their disposal, each with well defined properties and tradeoffs. Thus, the evaluator, by selecting one rule over another, is clearly communicating the properties they deem to be important. While we intentionally leave this choice to the evaluator, in our opinion maximal lotteries are particularly well suited for evaluation problems. Maximal lotteries exhibits many of the properties we deem to be important, is computationally tractable, requires only pairwise comparisons, and the probabilistic nature of the scheme allows for nuanced interpretations of results that are critical when comparing agents and systems. 
 
 We identify directions for future work. Recent research on robust voting may provide methods to capture and represent uncertainty in the evaluation process~\citep{boehmer2022quantitative}, particularly when agent-performance is stochastic or sensitive to parameterizations. VasE explicitly uses only rankings or pairwise comparisons, ignoring cardinal information such as scores or rewards which may contain useful signals. Research on distortion (\eg~\citep{anshelevich2021distortion}) may provide insights as to how to enrich VasE findings. More broadly, we believe the general-agent evaluation problem serves as a bridge between different research communities and facilitates sharing sharing problems, data, and methodologies, for the betterment of all.
 
 \subsection{Acknowledgments}
 
 We would like to thank Manfred Diaz for continued discussions on evaluation methodologies, Felix Brandt for his feedback on earlier versions of this paper, and Ratip Emin Berker for feedback and suggested improvements. 

\bibliography{main}

\appendix

\onecolumn

\section{Nash Averaging}
\label{app:nash_averaging}

\begin{lemma} [Symmetric MENE] \label{lemma:symmetric_mene}
    In a two-player, symmetric, zero-sum game, the maximum entropy Nash equilibrium (MENE) results in unique and equal mixed strategies for both players.
\end{lemma}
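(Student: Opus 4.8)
The plan is to reduce the statement to a strict-concavity argument over a single shared convex strategy set. First I would fix notation: represent the game by the row player's payoff matrix $M$, so that under mixed strategies $p, q \in \Delta(A)$ player $1$'s payoff is $p^T M q$ and player $2$'s is $-p^T M q$. The assumption that the game is symmetric and zero-sum forces $M$ to be skew-symmetric, $M^T = -M$. The first useful consequence is that the value of the game is $0$: for any $x \in \Delta(A)$ the mirroring identity $x^T M x = (x^T M x)^T = x^T M^T x = -x^T M x$ gives $x^T M x = 0$, so each player can guarantee at least $0$ by copying the opponent, and by the minimax theorem the value is exactly $0$.

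Next I would characterize the equilibrium set. In a zero-sum game the Nash equilibria are exactly the pairs formed by player $1$'s maximin strategies and player $2$'s minimax strategies. Writing out the optimality conditions against value $0$, player $1$'s maximin set is $\{p \in \Delta(A) : p^T M q \ge 0 \ \forall q\} = \{p : M^T p \ge 0\}$ and player $2$'s minimax set is $\{q \in \Delta(A) : p^T M q \le 0 \ \forall p\} = \{q : M q \le 0\}$. Using $M^T = -M$, both descriptions collapse to the single set $S = \{x \in \Delta(A) : M x \le 0\}$. Hence the equilibrium set is exactly the product $S \times S$, where $S$ is nonempty (an equilibrium exists), compact, and convex, being the intersection of the simplex with half-spaces.

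Finally I would invoke the entropy objective. The MENE maximizes the entropy of the equilibrium, which for a product strategy $p \otimes q$ equals $H(p) + H(q)$; over the product set $S \times S$ this objective separates into two independent maximizations of $H$ over the common set $S$. Since $H$ is strictly concave and $S$ is convex and compact, $\max_{x \in S} H(x)$ is attained at a unique point $x^*$, so the MENE is the unique pair $(x^*, x^*)$, equal for both players. The main obstacle is the middle step: correctly deriving that the two players' optimal-strategy polytopes coincide as the single set $S$, which is where skew-symmetry and the zero value of the game do the real work. Once that product structure is in hand, uniqueness and equality follow routinely from strict concavity of the Shannon entropy.
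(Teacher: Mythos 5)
Your proof is correct and follows essentially the same route as the paper's: the Nash-equilibrium set of a zero-sum game is a product of two convex sets, symmetry makes those two sets coincide, and strict concavity of the (separable) entropy then selects a unique, equal pair. The only difference is cosmetic --- you establish the product structure explicitly via the polytope $\{x \in \Delta(A) : Mx \le 0\}$, whereas the paper invokes convexity and interchangeability of zero-sum equilibria abstractly.
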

\begin{proof}
    Let an NE be the tuple $(x_1, x_2)$ in a two-player zero-sum game. If there is a second NE, $(x'_1, x'_2)$, any mixture over $x_1$ and $x'_1$, or $x_2$ and $x'_2$ is also an NE (convexity). Furthermore, $(x_1, x'_2)$ and $(x'_1, x_2)$ are also an NEs (interchangeable). Therefore the maximum of any strictly concave function will select uniquely from the set of Nash equilibria. Maximum entropy therefore performs equilibrium selection uniquely. If the game is symmetric, $\{ x_1, x'_1, ... \} = \{ x_2, x'_2, ... \}$, both players have the same components of the NE. Therefore, the MENE of symmetric, two-player, zero-sum games has the property that $x_1=x_2$.
\end{proof}

\begin{lemma} [Entropy of Product of Marginals] \label{lemma:marginals_entropy}
    When $x_i \geq 0$, $x_j \geq 0$, $\sum_i x_i = 1$ and $\sum_j y_j = 1$,
    \begin{align}
        - \sum_{ij} x_i y_j \ln \left( x_i y_j \right) = - \sum_{i} x_i \ln \left( x_j \right) - \sum_{j} y_j \ln \left( y_j \right).
    \end{align}
\end{lemma}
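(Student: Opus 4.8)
The plan is to reduce the identity to the elementary additivity of the logarithm, $\ln(x_i y_j) = \ln x_i + \ln y_j$, and then exploit the two normalization constraints $\sum_i x_i = 1$ and $\sum_j y_j = 1$ to collapse the resulting double sums into the two single-index entropies on the right-hand side. (Reading the right-hand side with the evident index typo corrected, the target is $-\sum_i x_i \ln x_i - \sum_j y_j \ln y_j$.) Throughout I would adopt the standard convention $0 \ln 0 = 0$, justified by continuity since $\lim_{t \to 0^+} t \ln t = 0$; this is precisely what lets the statement hold even when some coordinates $x_i$ or $y_j$ vanish.

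First I would substitute $\ln(x_i y_j) = \ln x_i + \ln y_j$ into the left-hand side and split the double sum into two pieces:
\begin{align}
-\sum_{ij} x_i y_j \ln(x_i y_j) = -\sum_{ij} x_i y_j \ln x_i - \sum_{ij} x_i y_j \ln y_j.
\end{align}
Next I would factor each piece using the constraints. In the first sum the quantity $x_i \ln x_i$ does not depend on $j$, so
\begin{align}
-\sum_{ij} x_i y_j \ln x_i = -\Big(\sum_i x_i \ln x_i\Big)\Big(\sum_j y_j\Big) = -\sum_i x_i \ln x_i,
\end{align}
where the last equality uses $\sum_j y_j = 1$. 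Symmetrically, $y_j \ln y_j$ does not depend on $i$, so $-\sum_{ij} x_i y_j \ln y_j = -\sum_j y_j \ln y_j$ after invoking $\sum_i x_i = 1$. Adding the two factored expressions yields exactly the claimed right-hand side.

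There is essentially no hard step here: the only point requiring care is the treatment of zero entries, where the $0 \ln 0 = 0$ convention guarantees that both the logarithm substitution and the factoring of the sums remain valid, since a vanishing coordinate contributes nothing on either side. I would note in passing that this lemma is just the familiar fact that the Shannon entropy of a product distribution equals the sum of its marginal entropies; it is this additivity, together with the uniqueness of the symmetric MENE established in Lemma~\ref{lemma:symmetric_mene}, that is used in the subsequent Nash averaging analysis.
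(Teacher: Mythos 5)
Your proof is correct; note that the paper states Lemma~\ref{lemma:marginals_entropy} without providing any proof at all, so there is nothing to compare against, and your argument (splitting $\ln(x_i y_j) = \ln x_i + \ln y_j$, factoring the double sums via the normalization constraints, and invoking the $0\ln 0 = 0$ convention) is exactly the standard derivation the authors evidently had in mind. You are also right that the first term on the right-hand side contains an index typo and should read $-\sum_i x_i \ln(x_i)$.
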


The Nash averaging work \citep{Balduzzi18ReEval} formulates the AvT problem (Section D) as a symmetric, two-player, zero-sum \emph{double game}. A double game has two games within it, and each player, $p = \{1, 2\}$, has two ``subplayers'', $\{A, B\}$, that each control a mixed strategy, $\{x_p^A, x_p^B\}$. There is a single asymmetric score matrix $S$. This is an unusual formulation, and was presented this way to make the conditions compute Multidimensional Elo \citep{Balduzzi18ReEval} clearer. However, when computing the Nash average, there is an equivalent \emph{single game} formulation that is simpler to reason about.

\begin{theorem}[Nash Average AvT Equivalence]
    \begin{align}
        \arg\limits_{x_1^A, x_2^A} \min_{x_1^A, x_1^B} \max_{x_2^A,x_2^B}
        \begin{bmatrix}
            x_1^A \\
            x_1^B
        \end{bmatrix}^T
        \begin{bmatrix}
            0 & S \\
            -S^T & 0
        \end{bmatrix}
        \begin{bmatrix}
            x_2^A \\
            x_2^B
        \end{bmatrix} %
    \end{align}
    $ = \arg \min_{x_1^A} \max_{x_2^B} (x_1^A)^T S x_2^B$, 
    where $\mathbf{1}^T x_p^A = 1$, $\mathbf{1}^T x_p^B = 1$, $x_p^A \geq 0$ and $x_p^B \geq 0$.
\end{theorem}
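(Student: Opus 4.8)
The plan is to reduce the block ``double game'' to two independent copies of the single game, and then use the two preceding lemmas to pin down the maximum-entropy equilibrium. First I would expand the block bilinear form in the theorem statement. Carrying out the multiplication gives the payoff $P = (x_1^A)^T S x_2^B - (x_2^A)^T S x_1^B$, using that the scalar $(x_1^B)^T S^T x_2^A$ equals its own transpose $(x_2^A)^T S x_1^B$. I would also verify that the block matrix $M$ satisfies $M = -M^T$, so the double game is a symmetric, two-player, zero-sum game and Lemma~\ref{lemma:symmetric_mene} applies.

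The key structural observation is that $P$ \emph{decouples}: the variables $(x_1^A, x_2^B)$ appear only in the first term and $(x_2^A, x_1^B)$ only in the second, and each vector carries its own independent simplex constraint, so nothing links the two terms. Hence the $\min$-$\max$ problem separates into two independent games. The first, with $x_1^A$ minimizing and $x_2^B$ maximizing $(x_1^A)^T S x_2^B$, is exactly the single game on the right-hand side; the second, with $x_2^A$ minimizing and $x_1^B$ maximizing $(x_2^A)^T S x_1^B$, is an identical second copy of it. Invoking the minimax theorem together with the classical fact that the Nash set of a two-player zero-sum game is the product of its optimal row-strategy set and its optimal column-strategy set, I would conclude that the equilibrium set of the double game is the product of the equilibrium sets of these two copies.

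It then remains to show that the maximum-entropy selection respects this product structure, so that the extracted agent distribution $x_1^A$ coincides with the single game's. Here Lemma~\ref{lemma:marginals_entropy} does the work: treating each player's pair of sub-strategies as a product distribution over (agent, task) pairs, its entropy is the additive sum of the marginal entropies, so maximizing the total entropy over the equilibrium set decouples into maximizing the entropy of $x_1^A$, $x_1^B$, $x_2^A$, $x_2^B$ independently. Each such maximization picks the max-entropy optimal strategy of the corresponding single-game copy. By Lemma~\ref{lemma:symmetric_mene}, the double game's MENE is unique and satisfies $x_1^A = x_2^A$ and $x_1^B = x_2^B$, and these common values are precisely the max-entropy row and column strategies of $S$. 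In particular, the MENE agent distribution $x_1^A$ extracted from the double game equals the $x_1^A$ of $\arg\min_{x_1^A}\max_{x_2^B}(x_1^A)^T S x_2^B$, which is the claim.

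I expect the main obstacle to be the equilibrium-selection step rather than the algebraic decoupling: when $S$ admits multiple Nash equilibria, the decoupling only identifies the two equilibrium \emph{sets}, and one must argue that the maximum-entropy tie-break in the double game agrees with the one in the single game. The additivity of entropy from Lemma~\ref{lemma:marginals_entropy}, combined with the product structure of the zero-sum equilibrium set, is exactly what forces these two selections to coincide; verifying this compatibility carefully (and checking consistency with the symmetry constraint $x_1 = x_2$ from Lemma~\ref{lemma:symmetric_mene}) is the crux of the argument.
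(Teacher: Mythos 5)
Your proposal is correct and follows essentially the same route as the paper's proof: expand the block bilinear form so the double game decouples into two independent copies of the single game, then use Lemma~\ref{lemma:symmetric_mene} for the symmetry $x_1^A=x_2^A$, $x_1^B=x_2^B$ and Lemma~\ref{lemma:marginals_entropy} for the additive decomposition of entropy under the max-entropy selection. You spell out the equilibrium-selection step (product structure of the zero-sum Nash set and compatibility of the entropy tie-break) more explicitly than the paper does, but the underlying argument is the same.
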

\begin{proof}
    First, note that the optimization can be decomposed into the sum of two separate optimizations.
    \begin{align}
        &\phantom{=} \min_{x_1^A, x_1^B} \max_{x_2^A, x_2^B}
        \begin{bmatrix}
            x_1^A \\
            x_1^B
        \end{bmatrix}^T
        \begin{bmatrix}
            0 & S \\
            -S^T & 0
        \end{bmatrix}
        \begin{bmatrix}
            x_2^A \\
            x_2^B
        \end{bmatrix} \\
        &= \left( \min_{x_1^A} \max_{x_2^B} (x_1^A)^T S x_2^B \right) + \left( \min_{x_1^B} \max_{x_2^A} (x_1^B)^T (-S^T) x_2^A \right) \\
        &= \left( \min_{x_1^A} \max_{x_2^B} (x_1^A)^T S x_2^B \right) - \left( \min_{x_2^A} \max_{x_1^B} (x_2^A)^T S x_1^B \right) = 0
    \end{align}
    As expected, this evaluates to a game value of zero. By Lemma~\ref{lemma:symmetric_mene}, we know that $x^A = x_1^A = x_2^A$ and $x^B = x_1^B = x_2^B$ when using the maximum entropy criterion. Therefore, in order to find the MENE of the double game, by Lemma~\ref{lemma:marginals_entropy} we only need to find the MENE of the two-player zero-sum asymmetric single game, where player 1 has payoffs $S$.
\end{proof}

\section{Elo Rating Analysis of Example in Figure~\ref{fig:meeple_pentathlon}}
\label{app:elo_meeple}

Recall that, when applying Elo to the example in Figure~\ref{fig:meeple_pentathlon}, every time $x$ is preferred to $y$ counts as a win for $x$ and loss for $y$. This leads to the win rates depicted in Figure~\ref{fig:elo-meeple-winrates}, or the following matrix:
\[
\begin{bmatrix}
        & 0.8 & 0.4 \\
    0.2 &     & 0.4 \\
    0.6 & 0.6 &     \\
\end{bmatrix}
\]

Suppose we start with initial parameters $r^0_A = r^0_B = r^0_C$. Each iteration $t$, Elo applies an update rule of the form:
\[
r^{t+1}_x = r^t_x + \Delta_{r^t_x}
\]
where $\Delta_{r^t_x}$ is the magnitude of the gradient of $r^t_x$, defined below.

Suppose agents $x$ and $y$ play a game $g$ and $x$ wins (achieving a score of 1) and y loses (achieving a score of 0). The contribution of the change in ratings from just this single game is~\citep{Elo78}:
\[
\Delta_{Elo}(g, r_x, r_y, x, y) = K \cdot (S(g, x, y) - E(r_x, r_y)),
\]
where $K$ is a constant (usually 16 or 32), $S(g, x, y) = 1$ is the score achieved by agent $x$ against $y$ in game $g$, and $E(r_x,r_y)$ is the expected score according to the logistic model:
\[
E(r_x, r_y) = \frac{1}{1 + e^{(r_y - r_x)/400}}.
\]
Note that $E(r_x, r_y) = 1 - E(r_y, r_x)$.

If there were only one game played, between agent $x$ and agent $y$, the rating $r_x$ would increase by $\Delta(g, r_x, x, y)$ and $r_y$ would decrease by $\Delta(g, r_x, x, y)$.
Given a batch of head-to-head games $\cG$, let $\cG(x)$ be the games involving $x$. The gradient component of $r_x$ is:
\[
\Delta_{r^t_x} = \sum_{g \in \cG(x)} \Delta(g, r_x, r_y, x, y) - \Delta(g, r_y, r_x, y, x).
\]
For the game depicted in Figure~\ref{fig:elo-meeple-winrates}, this leads to:
\begin{align*}
\Delta_{r^t_A} & = &   & 4 K (1 - E(r_A, r_B))\\
              &   & -  & 1 K (1 - E(r_B, r_A))\\
              &   & +  & 2 K (1 - E(r_A, r_C))\\
              &   & -  & 3 K (1 - E(r_C, r_A))\\
              & = & 2K & - 4K \cdot E(r_A, r_B) + K \cdot E(r_B, r_A)\\
              &   &    & - 2K \cdot E(r_A, r_C) +3K \cdot E(r_C, r_A)\\
              & = & 2K & - 4K \cdot E(r_A, r_B) + K (1 - E(r_A, r_B))\\
              &   &    & - 2K \cdot E(r_A, r_C) +3K (1 - E(r_A, r_C))\\
              & = & 6K & - 5K \cdot E(r_A, r_B) -5K \cdot E(r_A, r_C), \\
\end{align*}
and
\begin{align*}
\Delta_{r^t_C} & = &   & 3 K (1 - E(r_C, r_A))\\
              &   & -  & 2 K (1 - E(r_A, r_C))\\
              &   & +  & 3 K (1 - E(r_C, r_B))\\
              &   & -  & 2 K (1 - E(r_C, r_B))\\
              & = & 2K & - 3K \cdot E(r_C, r_A) + 2K \cdot E(r_A, r_C)\\
              &   &    & - 3K \cdot E(r_C, r_B) + 2K \cdot E(r_B, r_C)\\
              & = & 2K & - 3K \cdot E(r_C, r_A) + 2K (1 - E(r_C, r_A))\\
              &   &    & - 3K \cdot E(r_C, r_B) + 2K (1 - E(r_C, r_B))\\
              & = & 6K & - 5K \cdot E(r_C, r_A) -5K \cdot E(r_C, r_B)).\\
\end{align*}

It is easy to see that at iteration $t = 0$, $\Delta_{r^t_A} = \Delta_{r^t_C}$ since all the ratings are equal and $E(r_A, r_B) = E(r_A, r_C) = E(r_C, r_A) = E(r_C, r_B) = \frac{1}{2}$. So $r_A$ and $r_C$ will be incremented by the same amount, and the rating of $r_B$ will decrease by some amount.
At iteration $t > 0$, it will still be the case that $E(r_A, r_C) = E(r_C, r_A) = \frac{1}{2}$ because $r^1_A = r^1_C$. So the only difference in the terms of the gradients is that one has $E(r_A, r_B)$ and the other has $E(r_C, r_B)$. But, it is also the case that these are equal: $E(r_A, r_B) = E(r_C, r_B)$, since $r_C = r_A \Rightarrow r_C - r_B = r_A - r_B$.
So the gradients $\Delta_{r^t_A} = \Delta_{r^t_C}$ for all $t \ge 0$.

\section{Social Choice and Voting Methods}
\label{app:voting_methods}

In this section we provide technical details pertaining to the key concepts we use from social choice theory.  We also introduce the different voting rules used in the experiments, including our iterative version of maximum lotteries, which may be of independent interest.

\subsection{Properties}\label{app:cons_props}
We provide formal definitions of key concepts that were introduced in Section~\ref{sec:bg-soccho}.
Recall that a \defword{voting scheme} is defined by $\langle A, V, f\rangle$ where $A$ is the set of alternatives, $V$ is the voting population, and $f$ is a voting rule. Furthermore, voters' preferences are captured in a preference profile, $[\succeq ]$, a vector specifying the preferences of each voter.

Given a specific preference profile, it is often useful to know how many voters prefer an alternative $x$ over some other alternative $y$. Define
$$N(x,y) = |\{ v\in V|x\succ_v y\}| $$
where $N(x,x)=0$. Furthermore, define the \defword{vote margin} between alternatives $x$ and $y$ as
$$ \delta(x,y)=N(x,y)-N(y,x).$$

\begin{definition}[Condorcet Winner~\citep{deCondorcet1785}]
Given a specific preference profile, $[\succeq]$,  alternative $a\in A$ is a  (weak) \defword{Condorcet winner}  if $\forall a'\in A$, $\delta(a,a')\geq 0$. If $\forall a'\in A, \delta(a,a')>0$ then $a$ is a \defword{strong} Condorcet winner.
\end{definition}

We observe that a Condorcet winner might not exist. Consider the following scenario with 3 voters with the following preferences over alternatives $A$: $\{a\succ_1 b\succ_1 c, b\succ_2 c\succ_2 a, c\succ_3 a \succ_3 b\}$. 
No single alternative is even a weak Condorcet winner due to the cycle in the aggregate preferences. 
\begin{definition}[Condorcet Consistency]
A voting scheme is \defword{Condorcet consistent} if it identifies Condorcet winners when they exist.\footnote{By \emph{identifies} we mean that either the set of alternatives returned by a social choice function contains Condorcet winners or the social welfare function, top-ranks Condorcet winners.}

\end{definition}

\begin{definition}[Population Consistency~\citep{smith1973aggregation}]
Given two disjoint sets of voters, $V, V'$, $V\cap V' = \emptyset$ with preference profiles $[\succeq]$ and $[\succeq']$, a voting scheme is \defword{population consistent} if 
\[
f([\succeq])\cap f([\succeq']) \not= \emptyset\]
then
\[
f([\succeq, \succeq'])\subseteq f([\succeq])\cap f([\succeq']).
\]
If this holds with equality, then $f$ is \defword{resolute}.
\end{definition}

Recall that the class of scoring rules are population consistent.
A scoring rule is characterized by  a score vector $\mathbf{w}=(w_1,\ldots, w_m)$ where $w_1\geq w_2\geq \ldots \geq w_m$ and $w_1>w_m$. Each voter, given its ranking $\succeq_i$, assigns score $w_1$ to its top-ranked alternative, $w_2$ to the second-ranked alternative, and so on.
An overall score for each alternative is computed by summing across all weights assigned to it  by the different agents in $V$, and the set of alternatives with the highest score is returned as the outcome of the social choice function, $f_\mathbf{w}(\cdot)$ (or a ranking where alternatives are ordered by their scores is returned).  That is, given $\mathbf{w}$ and $[\succeq]$, where $w^j(a)$ is the score voter $j$ assigns to alternative $a$,
$$SCF_\mathbf{w}([\succeq])=\arg\max_{a\in \mathcal{A}} \sum_{j\in V} w^j(a).$$ 
Observe that social welfare functions are also well-defined via scoring vectors, where in the aggregated ranking alternatives are ranked according to their weighted sums.
Many familiar voting schemes belong to this class of voting rules, including plurality, Borda, and $k$-approval. 
For example, the Borda rule is defined by scoring vector $\mathbf{w}=(m-1,m-2,\ldots,1,0)$, Plurality corresponds to $\mathbf{w}=(1,0,0,\ldots,0)$,  while $k$-approval  is captured by $\mathbf{w}=(1_1,\ldots, 1_k,0,\ldots,0)$, where the first $k$ alternatives are deemed acceptable to the agent.

The next key property is \defword{clone consistency}. Before introducing this property we must first introduce some terminology.
Let $A$ and $B$ be sets of alternative such that $B\subseteq A$. Given $[\succeq]$ $B$ is a \emph{component} of $A$ if for all  $a\in A\setminus B$ and $b,b'\in B$, $a\succeq b$ if and only if $a\succeq b'$. That is, $b$ and $b'$ are \emph{clones} of each other since they have the same relationship to all alternatives not in $B$. 

A voting rule is clone consistent if two conditions hold~\citep{Tideman87}:
\begin{enumerate}
    \item 
    If the voting rule returns a clone then it must also be the case that when any member of the clone set is removed and the voting rule is applied, then it  still returns a clone.
   
    \item If the voting rule returns an alternative $a$ that is not in the clone set then it must be the case that the voting rule also returns $a$ when any member of the clone set is removed from consideration.
   
\end{enumerate}

\begin{definition}[Clone consistency~\citep{Tideman87}]
Let $\langle A, V, f\rangle$ be a voting scheme, and let $[\succeq]$ be the preferences of the voters over $A$.  Let $B$ be a set of clones under $[\succeq]$. Furthermore, we use notation $[\succeq]_{|_{{C}\subset A}}$ to mean the preferences of voters restricted to alternatives in $C\subset A$.  Voting rule $f$ is clone consistent if:
\begin{enumerate}
\item For any $x\in B$
    $B\cap f([\succeq])\not = \emptyset \Leftrightarrow B\setminus \{x\} \cap f([\succeq]_{|_{A\setminus \{x\}}})\not = \emptyset$
    
\item For any $x\in B$ and $y\in A$ such that $x$ is a clone but $x$ is not a clone of $y$, 
$y\in f([\succeq])\Leftrightarrow y\in f([\succeq]_{|_{A\setminus\{x\}}})$
\end{enumerate}
\end{definition}

The final property of interest is \defword{agenda consistency}, a strong property that is only supported by probabilistic social choice functions. First note that an agenda, $B\subseteq A$ is simply a subset of alternatives over which voters have preferences. As before let $[\succeq]_{|_{B}}$ be voters preferences restricted to some agenda $B$, and recall that voting rule $f([\succeq])$ returns a distribution $p\in \Delta(A)$.
\begin{definition}[Agenda Consistency]\label{def:agenda}
Given (probabilistic) voting scheme $\langle A, V, f\rangle$ with $f([\succeq])=p$, let $B, C\subseteq A$ be two agendas such that $\mathrm{supp}(p)\in B\cap C$. Then voting rule $f$ is agenda consistent if
$$p\in f([\succeq]_{|_B})\cap f([\succeq]_{|_C}) \Leftrightarrow p\in f([\succeq_{|_{B\cup C}}]).$$
\end{definition}
The careful reader will observe that this definition is the combination of Sen's $\gamma$-expansion and $\alpha$-contraction properties~\citep{sen1971choice,sen1977social}.

\subsection{Voting Methods used in the Applications}

\begin{table*}[t!]
    \centering
    \begin{tabular}{l||c|c|c|c||c|c|c|c||c|}
               & App & Bor & Plu & STV & Cop & RPs & Kem & Sch & ML \\
    \hline
    Condorcet consistency  & & & & & $\checkmark$ & $\checkmark$ & $\checkmark$ & $\checkmark$ & $\checkmark$ \\
    Clone consistency & & & & & & $\checkmark$ & & $\checkmark$ & $\checkmark$ \\
    Population consistency & $\checkmark$ & $\checkmark$ & $\checkmark$ & & & & & & $\checkmark$ \\
    Agenda consistency & & & & & & & & & $\checkmark$ \\
    Ind. Irrelevant Alt (IIA) & $\checkmark$ & & & & & & & & $\checkmark$ \\
    Complexity & $m$ & $m$ & $m$ & $m^2$ & $m^2$ & $m^3$ & $m!$ & $m^3$ & $\mbox{poly}(m^2)$\\
    \hline
    \end{tabular}
    \caption{Summary of properties of each voting method used in our evaluation.}
    \label{tab:voting_method_summary_full}
\end{table*}

In this subsection, we describe each of the voting methods used in the applications of VasE in the paper. See Table~\ref{tab:voting_method_summary_full} for a summary of the properties each method.

We use the pentathlon example depicted in Figure~\ref{fig:meeple_pentathlon} as a simple recurring example. Recall the preference profile $[\succeq] = $

\begin{center}
\begin{tabular}{c|c}
{\bf Weight} & {\bf Vote} \\
\hline
$1$ &  $A \succ B \succ C$ \\
$1$ &  $A \succ C \succ B$ \\
$2$ &  $C \succ A \succ B$ \\
$1$ &  $B \succ C \succ A$ \\
\hline
\end{tabular}
\end{center}
which leads to preference and margin matrices
\[
N = \left( \begin{array}{ccc}
0 & 4 & 2 \\
1 & 0 & 2 \\
3 & 3 & 0 \end{array} \right),
~~~
M = N - N^T = \left( \begin{array}{ccc}
0 & 3 & -1 \\
-3 & 0 & -1 \\
1 & 1 & 0 \end{array} \right).
\]
Immediately we see that $C$ is a Condorcet winner because all of the values in the last row of $M$ are positive except the diagonal (which is always zero). So all of the Condorcet methods will top-rank C.

\subsubsection{Approval Voting}

In approval voting, the top $k$ alternatives in each vote are given 1 points, and the score is determined by total points earned. When $k = 2$, $A$ has a score of $1 + 1 + 2 = 4$, $C$ has a score of $1 + 2 + 1 = 4$, and $B$ has a score of $1 + 1 = 2$. Since approval voting is deterministic, a tie-breaker rule would be needed to determine a winner and totally-ordered social welfare function in this case.

Approval voting is a scoring method and hence population consistent. The score is computable by summing over the $k$ top entries, so the complexity is linear in $m$.

\subsubsection{Plurality ($1$-Approval)}

Plurality defines an alternative's score as the number of votes in which it was chosen as the top rank. In the example, this leads to $A$ with a score of 2, $B$ with a score of 2, and $B$ with a score of 1. Since plurality is a deterministic rule, a tie-breaker would have to be used to determine which of $A$ or $C$ is top-ranked in the social welfare function.

\subsubsection{Borda}

Borda will assign points to each alternative depending on their place in the vote $(m-1, m-2, m-3, \cdots, 0)$, and an alternative's score is their total points over all the votes. 
$A$ scores $2 + 2 + 2\cdot1 = 6$, $B$ scores $1+2 = 3$, and $C$ scores $1 + 2\cdot2 + 1 = 6$.

\subsubsection{Single Transferable Vote (STV)}

STV is a multi-winner method similar to plurality that attempts to approach proportional representation by re-assigning the interpretation of surplus votes. 

Suppose the number of winners is determined to be $k$.
The election proceeds in rounds. In each stage, there are a number of total votes ($n$) and number of remaining candidates ($m$).
On each round, either a winner is added to the set of winners or a loser is added to the set of losers. The process stops if the number of winners is $k$. 

The following takes place in each round. First, a ``quota'' is defined as the amount of votes that a winner must achieve to be declared a winner.
Let $n$ be the total number of votes and $k$ be the number of winners. The Droop quota
$\lfloor \frac{n}{k + 1} + 1 \rfloor$ is used for this. If there are any alternatives that reach the quota, they are elected. 
Votes that were used to elect the winner (up to the quota) are removed, and any
surplus votes (votes that go beyond the quota) are transferred to a lower-ranking alternative, \eg the top-ranked alternative is deleted from the vote and the next alternative in each surplus vote becomes the top-ranked alternative.
If there are not enough votes to reach the quota for this round, the alternative with the lowest number of votes is eliminated, and all of the votes that top-ranked the losing alternative are transferred in a similar way.

We define a social welfare function to be the winners sorted by how early they were elected (earliest being ranked higher) followed by the losing alternatives sorted by how late they were eliminated (latest being ranked higher). 

We also define a score for STV. 
Suppose the ranking is a concatenation of list $W$ (winners) and $L$ (losers).
When there is a winner in a round, they are appended to $W$, when there is a loser in the round, they are prepended to $L$.
The winners then have a base score of $2m - i_w$, where $i_w$ is the winner's index in $W$.
The losing alternatives have a base score of $m - i_l$, where $i_l$ is the index of the loser in $L$. 
To these base values we concatenate ``.X'' where $X$ is the number of votes they received in the round they won or were eliminated.
Each scores are interpretable: if it exceeds $m$, they are a winner, and their position determines which round they won in; if the score is less than or equal to $m$ then the alternative lost and the order determines which round they lost. The score encodes the number of votes as extra information after the period.

By default, if the number of winners is not specified, we choose $k = \min(1, \lfloor\frac{m}{2} \rfloor)$.

In the running example from Figure~\ref{fig:meeple_pentathlon}, STV ranks the alternatives $C \succ A \succ B$ with scores $(6.3, 3.2, 2.1)$ for alternatives $C$, $A$, and $B$, respectively.

\subsubsection{Copeland}

Copeland's method assigns a score to alternative $a \in A$ equal to the number of other candidates beaten head-to-head the half times the number of head-to-head ties.
Define $A_{\neg a} = A - \{ a \}$. Then, $\textsc{Copeland}([\succeq], a) =$
\[ 
| \{ b \in A_{\neg a} : \delta(a,b) > 0 \} | + \frac{1}{2} | \{ b \in A_{\neg a}: \delta(a,b) = 0 \} |.
\]
From the margin matrix $M$: the Copeland score of $A$ is 1, $B$ is 0, and $C$ is 2.
So the overall ranking is $C \succ A \succ B$.

Copeland is Condorcet consistent but not clone-proof. It only has to loop over each row in the margin matrix once to compute each alternative's score, so its complexity is $O(m^2)$.

\subsubsection{Kemeny-Young}
\label{app:kemeny-young}

Let $\vec{a}$ denote a specific sequence over all the alternatives in $a \in A$.  
Let $a[i]$ denote the alternative in vote $\vec{a}$ at position $i$ (0 is the top (most preferred), and $|\vec{a}|-1$ is the bottom (least preferred). 

Let $Z(\vec{a}) = 
\{ 0, 1, \cdots, |\vec{a}| - 1\}$, and
$Z_{i,j}(\vec{a}) = \{ (i, j)~|~(i, j) \in Z(\vec{a}) \times Z(\vec{a}), j > i \}$.
Define the Kemeney value of a sequence as the sum of preference margins between every two pair of alternatives in $\vec{a}$ where one $(j)$ is ranked lower than the other ($i$) in $\vec{a}$:  
\[
\textsc{KemenyValue}(\vec{a}) = \sum_{Z_{i,j}(\vec{a})} N(a[i], a[j]).
\]

The Kemeny-Young voting method~\citep{Kemeny59,young1975social} enumerates all possible $|A|!$ permutations, choosing the one that maximizes the Kemeny value $\argmax_{\vec{a}} \textsc{KemenyValue}(\vec{a})$.

While Kemeny-Young is not a scoring method, 
we define a sensible score for each individual.
Let $Z_{i=pos(a), j} = Z_{i,j} \cap \{ (i, j) \in Z_{i,j} \mbox{ s.t. } i = pos(a) \}$, where $pos(a)$ is alternative $a$'s position in $\vec{a}$.
The score is then the sum of preference margins versus all the agents that rank lower than $a$ it in the sequence $\textsc{KemenyScore}(\vec{a}, a) = $
\[
    \left\{ \begin{array}{ll}
         0 & \mbox{if $pos(a) = |\vec{a}| - 1$};\\
        \sum_{(i, j) \in Z_{pos(a), j} } N(a, a[j]) & \mbox{otherwise}.\end{array} \right.
\]

Under these definitions, the Kemeny method attributes the following scores to the example from Figure~\ref{fig:meeple_pentathlon}:

\begin{center}
\begin{tabular}{cc}
{\bf Ranking} & {\bf Kemeny Value}\\
\hline
$A \succ B \succ C$  & $8$\\
$A \succ C \succ B$  & $9$\\
$B \succ A \succ C$  & $5$\\
$B \succ C \succ A$  & $6$\\
$C \succ A \succ B$  & $10$\\
$C \succ B \succ A$  & $7$\\
\hline
\end{tabular}
\end{center}

In this case, $C \succ A \succ B$ is chosen, and the scores are $(6, 4, 0)$ for $C$, $A$, and $B$, respectively.

\subsubsection{Schulze}

The Schulze method constructs a graph between candidates and weighs the alternative using the strength of paths between each alternative~\citep{Schulze11}.

Let $p$ represent a path between two candidates $a, b \in A$, and $pos(p, c)$ (defined similarly to Kemeny-Young, 
the position/index of alternative $c$ in $p$).
A valid path $p$ between two candidates $a, b \in A$ is a sequence where
\begin{itemize}
    \item the first alternative in the sequence (position 0) is $a$, and
    \item the last alternative in the sequence (position $m-1$) is $b$, and
    \item for every pair of alternatives $(i, j)$ next to each other in the order ($pos(p, j) = pos(p, i)+1$), the number of votes that prefer the agent at position $i$ is higher than those who prefer the agent at position $j: \delta(p[i], p[j]) > 0$.
\end{itemize}

Define $Z(p) = \{ (0, 1), (1, 2), (2, 1), \cdots, (m-2, m-1) \}$.
The strength of the path as its weakest link:
\[
\textsc{PathStrength}(p) = \min_{i,j \in Z(p)} N(p[i], p[j]).
\]

A $m$-by-$m$ matrix $P$ is constructed where each entry between two alternatives represents the strongest path between the two alternatives: $P(a, b) = \max_{p} \textsc{PathStrength}(p)$. The Schulze method then assigns a social welfare function rank:
\[
a \succ b \Leftrightarrow P(a,b) > P(b,a).
\]

Similarly to Kemeny, scores are not defined in the case of Schulze, but we define one to be the sum of alternatives that are preferred an alternative $a$ to all other alternatives $b$ lower than $a$ in the rating.
\[
\textsc{SchulzeScore}(\succ, a) = \sum_{a \succ b} N(a,b). 
\]

In the recurring example from Figure~\ref{fig:meeple_pentathlon}, the strongest paths are:

\begin{center}
\begin{tabular}{ccc}
{\bf Path End Points} & {\bf Path} & {\bf Path Strength}\\
\hline
$A \rightarrow B$ & $(A, B)$   & 4\\
$C \rightarrow A$ & $(C, A)$   & 3\\
$C \rightarrow B$ & $(C, B)$   & 3\\
\hline
\end{tabular}
\end{center}
and a resulting matrix
\[
P = \left( \begin{array}{ccc}
0 & 4 & 0 \\
0 & 0 & 0 \\
3 & 3 & 0 \end{array} \right).
\]
This leads to the ranking $C \succ A \succ B$, and scores $(7, 4, 0)$ for alternatives $C$, $A$, and $B$, respectively.

The Schulze method is Condorcet consistent {\it and} clone consistent. The computation of the strongest paths between all candidates is similar to computing the widest path problem on a graph and can be done using dynamic programming; in particular, using a variants of the Floyd-Warshall algorithm~\citep{Rosen03} in $O(m^3)$.

\subsubsection{Ranked pairs (Tideman's method)}

Ranked pairs~\citep{Tideman87} is another method that is both Condorcet consistent and clone consistent.

Ranked pairs also builds a graph and then remove nodes from the graph to determine their final ranks. The first thing that is done is to sort all the pairs of alternatives $(a,b)$ in decreasing order of strength of victory $\delta(a,b)$, choosing only one combination if $\delta(a,b) = 0$. Denote this sorted sequence (of length ${m \choose 2}$) $R_p$.

Having $R_p$, a directed (acyclic) graph is constructed. Enumerating each entry $(a, b) \in R_p$ in decreasing order of preference strength, we add a directed edge $a \rightarrow b$ only if it would not create a (directed) cycle. 

Once the graph is created, the source (a vertex with no incoming edges) corresponds to the winner. To determine a full ranking, we iteratively remove vertices (and all associated edges) that correspond to the sources (choosing arbitrarily if there are no remaining sources, \ie disconnected nodes). 

Again, there is no scores defined for this method, but we define the score of an alternative to be the total sum of all edge weights ($\delta(a,b)$) over all paths that reach other alternatives that originate from the source when it is removed.

In the running example from Figure~\ref{fig:meeple_pentathlon}, this leads to the following sorted pairs:
\begin{center}
\begin{tabular}{c|c}
{\bf $\delta(a,b)$} & Pair $((a,b))$\\
\hline
$3$ & $(A, B)$ \\
$1$ & $(C, A)$ \\
$1$ & $(C, B)$ \\
\hline
\end{tabular}
\end{center}
and a graph with edges $A \rightarrow B$, then $C \rightarrow A$, and $C \rightarrow B$, depicted in Figure~\ref{fig:rps_graph}.

\begin{figure}[h!]
    \centering
    \includegraphics[scale=0.2]{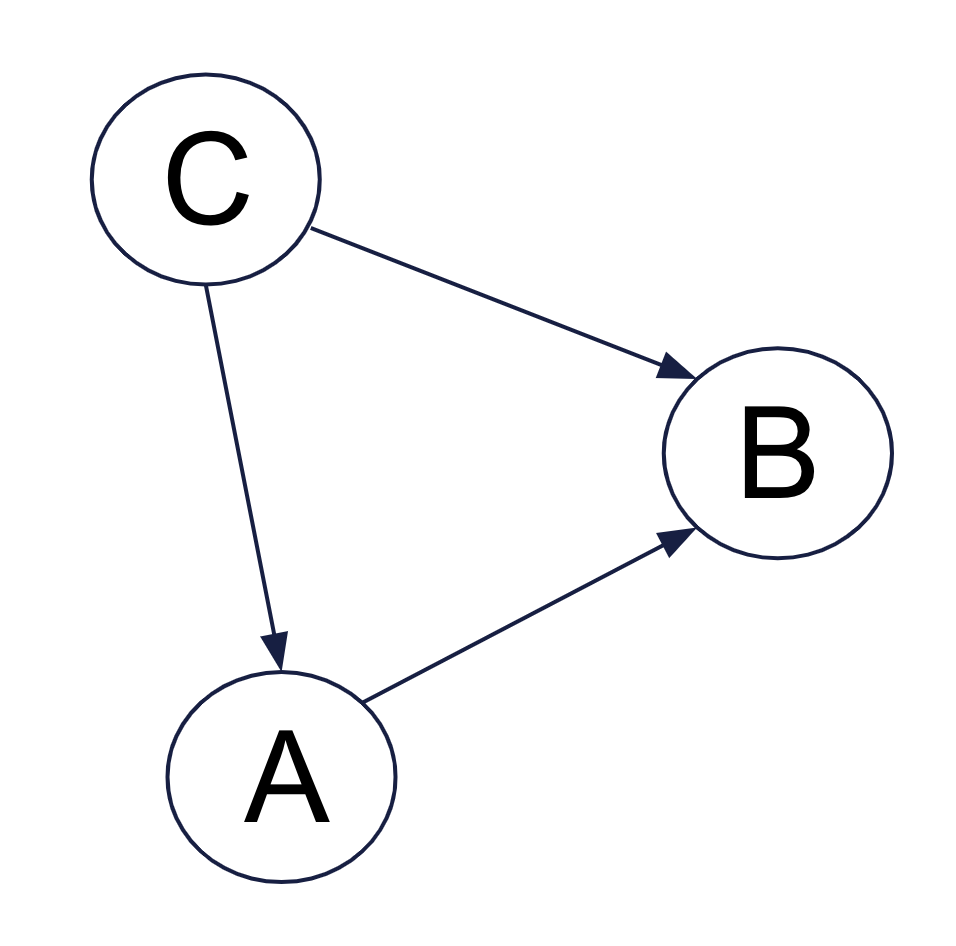}
    \caption{Graph constructed by ranked pairs method on the example from Figure~\ref{fig:meeple_pentathlon}.}
    \label{fig:rps_graph}
\end{figure}

So, first $C$ is removed, then $A$, leaving $B$,  which results in a ranking of $C \succ A \succ B$ and scores $(5, 3, 0)$ for $C$, $A$, and $B$, respectively.

\subsubsection{Maximal Lotteries}
\label{app:maximal-lotteries}

Maximal lotteries is probabilistic social choice method which chooses ranks based on the 
the solution of a two-player zero-sum game whose utilities are defined by the margin matrix~\citep{Krewaras65,Fishburn84}. That is, suppose there are probability distributions over alternatives (lotteries) $x, y \in \Delta(A)$. A lottery $x$ is maximal if and only if is is the solution of the game
\begin{equation}
\max_{x \in \Delta(A)} \min_{y \in \Delta(Y)} x^T M y \ge 0.
\label{eq:ml-game}
\end{equation}

The interpretation is that $x$ is a distribution over alternatives such that, if an adversary were to choose a different distribution, the expected margin would still be non-negative. 

Here, $M$ is an anti-symmetric matrix ($M = -M^T$), so the game above is a symmetric two-player zero-sum game over the same strategy spaces ($A$). As a result, It is a well-known fact of game theory that there always exists a solution to the game and the value of Equation~\ref{eq:ml-game} is always 0.
Also, an exact solution can be computed efficiently using linear programming that is polynomial in the size of $M$.

Maximal lotteries guarantees all of the consistency properties mentioned in this paper: Condorcet consistency, clone consistency, population consistency, and agenda consistency. 
It is the only method to satisfy both population consistency and composition consistency~\citep{brandl2016consistent} and can achieve this combination of properties due to being a probabilistic method. 
In addition, it satisfies independence of irrelevant alternatives~\citep{brandl2020arrovian}, a property at the heart of Arrow's impossibility theorem. Finally, by being probabilistic, it avoids some strategic manipulation that is known to be problematic in deterministic voting rules~\citep{Brandt17Rolling}.

In the recurring example from Figure~\ref{fig:meeple_pentathlon}, one can see from the margin matrix that $C$ is the only strategy that survives iterated elimination of dominated strategies. Hence, C is the only minimax-optimal solution to game formulated in Equation~\ref{eq:ml-game} so maximal lotteries chooses $C$ with probability 1.\\

\noindent {\bf Iterative Maximal Lotteries (IML)}\\

We now describe a slight refinement of maximal lotteries (ML) that produces a special type of social welfare function. The maximal lotteries method can be iterated as shown in Algorithm~\ref{alg:iterative-ml}.

\begin{algorithm}
\DontPrintSemicolon 
\KwIn{An initial set $A_0$ of alternatives}
\KwOut{A ranking of winner sets, $W$, and their corresponding probability distributions}
Let $W \gets ()$ be an empty sequence \;
\For{round $t \in \{0, 1, 2, \cdots, m-1\}$}{
    \If{$|A_t| = 0$}{
        {\bf break} \;
    }
    $x_t \gets \textsc{MaximalLotteries}(A_t)$ \;
    $w_t \gets \{ a \in A_t: x_t(a) > 0 \}$ \;
    Append $(x_t, w_t)$ to the end of $W$ \;
    $A_{t+1} = A_t - w_t$ \; \label{alg:reduce-alts}
    Remove the winners $w_t$ from all the votes. \; \label{alg:mod-votes}
}
\Return{$W$} \;
\caption{Iterative Maximal Lotteries (IML)}
\label{alg:iterative-ml}
\end{algorithm}

We refer to the cardinality $|W|$ as the {\it number of levels} of the resulting evaluation.
On each round, $W$ is augmented with the set of winners for that round (and their corresponding distributions), and then these agents are removed from the set of alternatives and deleted from all of the votes for the next round. (Note: in practice, this can be easily achieved by maintaining a margin matrix $M_t$ in each round and deleting the rows and columns that correspond to the winners in it rather than modifying the votes on line~\ref{alg:mod-votes}).

There are a some interesting interpretations of $W$. First, if $|W| = m$, then every $w_t$ is a singleton and there were no mixed strategy equilibria (cycles) and the order is strictly transitive. More generally, the number of levels determines how often (on average) the subgroups have mixed strategy (cyclic) solutions: a high value close to $m$ means mostly transitive solutions in each subgroup. A low number of levels (closer to 0) means there is often mixed strategy solutions.
Also, roughly speaking, the agent's level (the level achieved by that agent -- earlier in W referring to the top, and later to the bottom) determines its relative rank. An agent with a higher level means it was chosen earlier as a winner of the subgroup of agents at that time, whereas agents at the same level are comparable (contained within the same strategic cycle).

Define $\textsc{Level}(W, a) = 0$ for an agent at the bottom level and $|W| - 1$ for the agents at the top level. 
We extract a total order by defining a score that corresponds to the agent's level plus the probability of the strategy that selects it within its level:
\[
\textsc{IMLScore}(W, a) = \textsc{Level}(W, a) + x_{|W| - 1 - \textsc{Level}(W, a)}(a).
\]

This leads to an interpretable score. First, the number of levels is directly conveyed in the range of values (if the highest score is $s$, then the number of levels in the ranking ($|W|$) is $\lceil s \rceil + 1$), and the size of each level as well as probabilities assigned to each agent, directly readable from the second (fractional) part of the score. For example, an agent achieving a score of 3.45 means that it reached the third level and had a probability of $0.45$ among the level-4 agents (level 0 scores use the interval $(0,1]$, level 1 uses the internal $(1, 2]$, level 2 uses the interval $(2, 3]$, etc.).

In the recurring example from Figure~\ref{fig:meeple_pentathlon}, this leads to three rounds, with margin matrices:

\[
M_0 = \left( \begin{array}{ccc}
0 & 3 & -1 \\
-3 & 0 & -1 \\
1 & 1 & 0 \end{array} \right)
\]

\[
M_1 = \left( \begin{array}{cc}
0 & 3  \\
-3 & 0 \end{array} \right)
\]

\[
M_2 = ( 0 ),
\]

with solutions $x_0 = \{(C, 1)\}$, $x_1 = \{(A, 1)\}$, and $x_2 = \{(B, 1)\}$. 

This leads to $C \succ A \succ B$ and scores $(3, 2, 1)$ for alternatives $C$, $A$, and $B$, respectively. Note that the number of levels is accessible by inspection and there were no game-theoretic cycles in the iterative winner selection because all the values scores have integer values.

\section{Similarities and Differences between Maximal Lotteries and Nash Averaging}
\label{app:ml-vs-nashavg}

On first glance, it may appear that maximal lotteries~\citep{Fishburn84} and Nash averaging~\citep{Balduzzi18ReEval} are similar, and indeed there are some similarities: both solve a two-player zero-sum game derived from what can be viewed as preference information from the evaluation process. Namely, cardinal utilities for Nash averaging and ordinal information in the case of maximal lotteries. 

However, there are many differences between the two methods:
\begin{enumerate}
\item The game that is solved is different. Maximal lotteries solves the game induced by the margin matrix, $M$, which can be formulated from ranked preferences lists (or pairwise comparison data). Nash averaging solves the game that uses actual payoffs (\eg normalized scores in the case of Atari, and expected winnings in the AvA case like heads-up poker).
\item The targeted solution concept is different. The form of the Nash equilibrium returned by maximal lotteries is left unspecified (unconstrained) whereas Nash averaging requires the maximum-entropy Nash equilibrium to guarantee uniqueness. 
\item How rankings are derived from the solution is different. The rankings derived by maximal lotteries are based on the Nash equilibrium distribution itself, whereas the ranking according to Nash averaging is based on the expected value to each agent under the maximum-entropy Nash equilibrium.
\item Extensions to more general AvA setting (\ie $n$-player with $n \ge 3$, or general-sum) differ. Nash averaging naturally extends to the meta-game played by the $n$ players~\citep{marris2022game}, leading to tensors whose sizes are generally exponential in $n$ (necessitating more efficient solution concepts such as correlated equilibria). In maximal lotteries, using the AvA model described above,
every game played is a vote that contributes to defining the margin matrix, $M$, which is necessarily symmetric and hence quadratic in $m$.
\end{enumerate}

One of the main differences is that Nash averaging, by design, uses the magnitude of the score in comparing agents, so one big expected win of $10$ is preferred to seven wins of size $1$. In maximal lotteries, the wins are non-numerical, and it is {\it only} the number of times an agent ranks more highly than another that determines its pairwise value in the game that gets solved. The comparison may be determined from some underlying numerical value (such as number of supply centers achieved, in the game of Diplomacy), but only the pairwise comparisons end up influencing the margin matrix.

Maximal lotteries can be iterated while preserving its guarantees, as discussed in Section~\ref{app:maximal-lotteries}.
We believe that Nash averaging can be iterated in a very similar way (\eg as Algorithm~\ref{alg:iterative-ml}), producing a sequence of subgames where agents in the support are removed and process is continued in a new subgame without the agents. Similarly to maximal lotteries, this would produce a sequence of groups of agents where agents in the same group were all in the support of the subgame defined at the time they were removed. We believe that the invariance properties of the original Nash averaging~\citep{Balduzzi18ReEval} could also be preserved by this variant.
However, agents in the same group would always be indistinguishable because they achieve the same expected value to the maximum-entropy Nash equilibrium of the subgame that was solved when they were in the support.

\section{Voting Method Logistics: Variable Vote Lengths, Uneven Alternative Distributions, and Tie-Breaking}
\label{app:logistics}

There are several subtleties to be aware of when applying voting methods for evaluation. In this section, we discuss some of these issues.

Define the length of a vote to be the number of alternatives in its preference order. Some methods are can be sensitive to unequal vote lengths. For example, if Borda assigns points decreasing from $m-1$ to the lowest agent, then agents in votes of length less than $m$ are getting the same score despite being compared to fewer agents.
Similarly for other scoring methods: a first-place win for plurality is scored the same way even if, for example, in one case the ballots might contain only 2 candidates, and in another there are 7. 
Our applications, in Section~\ref{sec:eval}, were mostly unaffected by variable vote lengths as most of them had the same length. 
The only application that differed in length was the second application of HELM. 

Another potential problem is if the data set of votes contains an uneven distribution of alternatives. This can happen if, for example, one alternative is present in 80\% of the votes versus another alternative appearing in only 60\% of the votes. This, again, can affect scoring rules because one agent may have an easier time being reflected in the top ranks if they are present in a large number of votes. 
Again, most of our applications in Section~\ref{sec:eval} are not affected by this problem because the distribution is uniform or very close to uniform.
In the AvA application to Diplomacy described in Section~\ref{sec:eval-ava}, the average number of games player per player is $4.104 \pm 0.749$ (95\% confidence intervals); hence, this can have an effect on the results. However, since Elo does not control for uneven distributions and to remain fair to Elo, our results did not control for the distribution of games played. 

Another potential problem is how to deal with ties in scores, leading to uncertainty on how rank agents. In the applications in this paper, we simple break ties randomly in way that is consistent across all methods (more specifically, by adding small random perturbations to the values compared). However, there has been some work on more principled ways to break ties that could be applied to VasE. For example, the Normal Independent Voter Perturbation (NIVP) is a general method that satisfies several desirable properties, one of them being the preservation of clone-proofness~\citep{Freeman15General}.\\


\newpage
\clearpage


\section{Full Results on the Arcade Learning Environment}
\label{app:ale}


\subsubsection{Results from ~\citep[Table 5]{Hessel17Rainbow}}

Elo ratings for the ALE agents are shown on the left side of Table~\ref{tab:ale_elos}.

The VasE rankings and scores from~\citep[Table 5]{Hessel17Rainbow} are shown in Table~\ref{tab:atari-tbl5-full}. Note that in the case of Iterative Maximal Lotteries, there is a cycle in the preferences on the fourth level (between \textsc{a3c} and \textsc{dueling-DQN}) due to a tie in the head-to-head comparison, \ie $\delta(\textsc{a3c}, \textsc{deuling-dqn}) = 0$.

\begin{table}[ht!]
\centering
\begin{tabular}{cc}
{\bf Agent} & {\bf Elo rating} \\
\hline
\textsc{rainbow} & 424\\
\textsc{dist-dqn} & 325\\
\textsc{prio-ddqn} & 274\\
\textsc{duel} & 238\\
\textsc{a3c}   & 213\\
\textsc{ddqn} & 162 \\
\textsc{noisy-dqn} & 92\\
\textsc{dqn} & 0\\
\hline
\end{tabular}
~~~~~~~~~~
\begin{tabular}{cc}
{\bf Agent} & {\bf Elo rating} \\
\hline
\textsc{rainbow} & 1049\\
\textsc{dist-dqn} & 953\\
\textsc{prio-ddqn} & 912\\
\textsc{duel} & 880\\
\textsc{a3c}   & 855\\
\textsc{human} & 849 \\
\textsc{ddqn} & 810 \\
\textsc{noisy-dqn} & 748\\
\textsc{dqn} & 658\\
\textsc{random} & 0\\
\hline
\end{tabular}
\caption{Left: Elo ratings for ALE agents (relative to bottom agent defined to have rating 0) across 54 Atari games.
Right: Relative Elo ratings when human and random are included in the set of agents.
\label{tab:ale_elos}}
\end{table}

\begin{table*}[ht!]
\vspace{5cm}
\begin{center}
\begin{tabular}{|c|ll|}
\multicolumn{3}{c}{\bf Approval(k=3)}\\
\hline
Rank & Agent & Score\\
\hline
1 & rainbow & 41\\
2 & dist-dqn & 35\\
3 & prio-ddqn & 23\\
4 & a3c & 22\\
5 & duel-ddqn & 19\\
6 & ddqn & 11\\
7 & nois-dqn & 8\\
8 & dqn & 3\\
\hline
\end{tabular}
~
\begin{tabular}{|c|ll|}
\multicolumn{3}{c}{\bf Borda}\\
\hline
Rank & Agent & Score\\
\hline
1 & rainbow & 295\\
2 & dist-dqn & 247\\
3 & prio-ddqn & 222\\
4 & dueling-ddqn & 201\\
5 & a3c & 187\\
6 & ddqn & 159\\
7 & nois-dqn & 122\\
8 & dqn & 79\\
\hline
\end{tabular}
~
\begin{tabular}{|c|ll|}
\multicolumn{3}{c}{\bf Copeland}\\
\hline
Rank & Agent & Score\\
\hline
1 & rainbow & 7.0\\
2 & dist-dqn & 6.0\\
3 & prio-ddqn & 5.0\\
4 & duel-ddqn & 3.5\\
5 & a3c & 3.5\\
6 & ddqn & 2.0\\
7 & nois-dqn & 1.0\\
8 & dqn & 0.0\\
\hline
\end{tabular}\\

\vspace{0.5cm}

\begin{tabular}{|c|ll|}
\multicolumn{3}{c}{\bf Kemeny-Young}\\
\hline
Rank & Agent & Score\\
\hline
1 & rainbow & 295\\
2 & dist-dqn & 232\\
3 & prio-ddqn & 188\\
4 & a3c & 125\\
5 & duel-ddqn & 123\\
6 & ddqn & 78\\
7 & nois-dqn & 36\\
8 & dqn & 0\\
\hline
\end{tabular}
~
\begin{tabular}{|c|ll|}
\multicolumn{3}{c}{\bf Iterative Maximal Lotteries}\\
\hline
Rank & Agent & Score\\
\hline
1 & rainbow & 7.00\\
2 & dist-dqn & 6.00\\
3 & prio-ddqn & 5.00\\
4 & a3c & 3.98\\
5 & duel-ddqn & 3.02\\
6 & ddqn & 3.00\\
7 & nois-dqn & 2.00\\
8 & dqn & 1.00\\
\hline
\end{tabular}
~
\begin{tabular}{|c|ll|}
\multicolumn{3}{c}{\bf Plurality}\\
\hline
Rank & Agent & Score\\
\hline
1 & rainbow & 19\\
2 & a3c & 12\\
3 & dist-dqn & 8\\
4 & prio-ddqn & 6\\
5 & duel-ddqn & 5\\
6 & nois-dqn & 2\\
7 & ddqn & 2\\
8 & dqn & 0\\
\hline
\end{tabular}\\

\vspace{0.5cm}

\begin{tabular}{|c|ll|}
\multicolumn{3}{c}{\bf Ranked Pairs}\\
\hline
Rank & Agent & Score\\
\hline
1 & rainbow & 642\\
2 & dist-dqn & 430\\
3 & prio-ddqn & 292\\
5 & a3c & 102\\
4 & duel-ddqn & 152\\
6 & ddqn & 68\\
7 & nois-dqn & 20\\
8 & dqn & 0\\
\hline
\end{tabular}
~
\begin{tabular}{|c|ll|}
\multicolumn{3}{c}{\bf Schulze}\\
\hline
Rank & Agent & Score\\
\hline
1 & rainbow & 241\\
2 & dist-dqn & 204\\
3 & prio-ddqn & 168\\
4 & a3c & 137\\
5 & duel-ddqn & 110\\
6 & ddqn & 73\\
7 & nois-dqn & 36\\
8 & dqn & 0\\
\hline
\end{tabular}
~
\begin{tabular}{|c|ll|}
\multicolumn{3}{c}{\bf STV(num winners = 3)}\\
\hline
Rank & Agent & Score\\
\hline
1 & rainbow & 16.19\\
2 & dist-dqn & 15.14\\
3 & a3c & 14.16\\
4 & duel-ddqn & 8.12\\
5 & prio-ddqn & 7.6\\
6 & nois-dqn & 6.3\\
7 & ddqn & 5.2\\
8 & dqn & 4.0\\
\hline
\end{tabular}\\
\caption{VasE results for Atari data from~\citep[Table 5]{Hessel17Rainbow}. \label{tab:atari-tbl5-full}}
\end{center}
\vspace{1cm}
\end{table*}

\subsubsection{Results from ~\citep[Table 5]{Hessel17Rainbow} and Human + Random from ~\citep[Section H.4]{Badia20Agent57}}

If we add the scored for two more agents (random and human), the Elo ratings are shown in the right side of Table~\ref{tab:ale_elos}. The resulting VasE rankings and scores are shown in Table~\ref{tab:atari_two_extra_full}.

\begin{table*}[h!]
\begin{center}
\begin{tabular}{|c|ll|}
\multicolumn{3}{c}{\bf Approval(k=3)}\\
\hline
Rank & Agent & Score\\
\hline
1 & rainbow & 38\\
2 & dist-dqn & 29\\
3 & human & 23\\
4 & prio-ddqn & 20\\
5 & a3c & 20\\
6 & duel-ddqn & 13\\
7 & noisy-dqn & 8\\
8 & ddqn & 8\\
9 & dqn & 3\\
10 & random & 0\\
\hline
\end{tabular}
~
\begin{tabular}{|c|ll|}
\multicolumn{3}{c}{\bf Borda}\\
\hline
Rank & Agent & Score\\
\hline
1 & rainbow & 384\\
2 & dist-dqn & 331\\
3 & prio-ddqn & 304\\
4 & duel-ddqn & 284\\
5 & a3c & 268\\
6 & human & 264\\
7 & ddqn & 239\\
8 & nois-dqn & 200\\
9 & dqn & 152\\
10 & random & 4\\
\hline
\end{tabular}
~
\begin{tabular}{|c|ll|}
\multicolumn{3}{c}{\bf Copeland}\\
\hline
Rank & Agent & Score\\
\hline
1 & rainbow & 9.0\\
2 & dist-dqn & 8.0\\
3 & prio-ddqn & 7.0\\
4 & duel-ddqn & 5.5\\
5 & a3c & 5.5\\
6 & ddqn & 4.0\\
7 & human & 3.0\\
8 & nois-dqn & 2.0\\
9 & dqn & 1.0\\
10 & random & 0.0\\
\hline
\end{tabular}\\

\vspace{0.5cm}

\begin{tabular}{|c|ll|}
\multicolumn{3}{c}{\bf Kemeny-Young}\\
\hline
Rank & Agent & Score\\
\hline
1 & rainbow & 384\\
2 & dist-dqn & 314\\
3 & prio-ddqn & 271\\
4 & a3c & 206\\
5 & duel-ddqn & 206\\
6 & ddqn & 158\\
7 & human & 119\\
8 & nois-dqn & 90\\
9 & dqn & 54\\
10 & random & 0\\
\hline
\end{tabular}
~
\begin{tabular}{|c|ll|}
\multicolumn{3}{c}{\bf Iterative Maximal Lotteries}\\
\hline
Rank & Agent & Score\\
\hline
1 & rainbow & 9.00\\
2 & dist-dqn & 8.00\\
3 & prio-ddqn & 7.00\\
4 & duel-ddqn & 5.83\\
5 & a3c & 5.17\\
6 & ddqn & 5.00\\
7 & human & 4.00\\
8 & nois-dqn & 3.00\\
9 & dqn & 2.00\\
10 & random & 1.00\\
\hline
\end{tabular}
~
\begin{tabular}{|c|ll|}
\multicolumn{3}{c}{\bf Plurality}\\
\hline
Rank & Agent & Score\\
\hline
1 & human & 19\\
2 & rainbow & 11\\
3 & a3c & 9\\
4 & dist-dqn & 5\\
5 & duel-ddqn & 5\\
6 & prio-ddqn & 3\\
7 & ddqn & 2\\
8 & random & 0\\
9 & nois-dqn & 0\\
10 & dqn & 0\\
\hline
\end{tabular}\\

\vspace{0.5cm}

\begin{tabular}{|c|ll|}
\multicolumn{3}{c}{\bf Ranked Pairs}\\
\hline
Rank & Agent & Score\\
\hline
1 & rainbow & 1172\\
2 & dist-dqn & 890\\
3 & prio-ddqn & 696\\
5 & a3c & 390\\
4 & duel-ddqn & 444\\
6 & ddqn & 302\\
7 & human & 202\\
8 & nois-dqn & 126\\
9 & dqn & 54\\
10 & random & 0\\
\hline
\end{tabular}
~
\begin{tabular}{|c|ll|}
\multicolumn{3}{c}{\bf Schulze}\\
\hline
Rank & Agent & Score\\
\hline
1 & rainbow & 315\\
2 & dist-dqn & 278\\
3 & prio-ddqn & 242\\
4 & a3c & 211\\
5 & duel-ddqn & 184\\
6 & ddqn & 147\\
7 & human & 119\\
8 & nois-dqn & 90\\
9 & dqn & 54\\
10 & random & 0\\
\hline
\end{tabular}
~
\begin{tabular}{|c|ll|}
\multicolumn{3}{c}{\bf STV(num winners = 3)}\\
\hline
Rank & Agent & Score\\
\hline
1 & human & 20.19\\
2 & rainbow & 19.14\\
3 & a3c & 18.14\\
4 & dist-dqn & 10.12\\
5 & duel-ddqn & 9.6\\
6 & prio-ddqn & 8.5\\
7 & ddqn & 7.2\\
8 & random & 6.0\\
9 & nois-dqn & 5.0\\
10 & dqn & 4.0\\
\hline
\end{tabular}
\end{center}
\caption{VasE results on Atari with data from~\citep[Table 5]{Hessel17Rainbow} merged with two extra columns (human and random) taken from~\citep[Section H.4]{Badia20Agent57} \label{tab:atari_two_extra_full}.}
\vspace{2.5cm}
\end{table*}

\subsubsection{Analysis of Atari results with $\alpha$-Rank}

To try another game-theoretic approach, we also evaluate $\alpha$-rank on the same score matrix. 
In particular, we use the high ranking-intensity parameter method~\citep[Section 2.5]{Omidshafiei19AlphaRank} found in its open reference implementation~\citep{LanctotEtAl2019OpenSpiel}.
In contrast to Nash averaging, $\alpha$-rank finds a full support equilibrium with agent probabilities in $[0.07, 0.21]$, task probabilities in $[0.005, 0.057]$, and a ranking resembling VasE:
\textsc{rainbow} $\succ$ \textsc{dist-dqn} $\succ$ \textsc{duel} $\succ$ \textsc{prio-ddqn} $\succ$ \textsc{a3c} $\succ$  \textsc{ddqn} $\succ$ \textsc{noisy-dqn} $\succ$ \textsc{dqn}. In this case, $\alpha$-rank is considerably less adversarial in the weighting of tasks for ranking. The full weight distributions for agents and tasks found by $\alpha$-rank are given Figures \ref{fig:alpharank_agents} and \ref{fig:alpharank_tasks}.

\begin{figure*}[ht!]
\begin{center}
\hspace{-0.5cm}\includegraphics[scale=0.37]{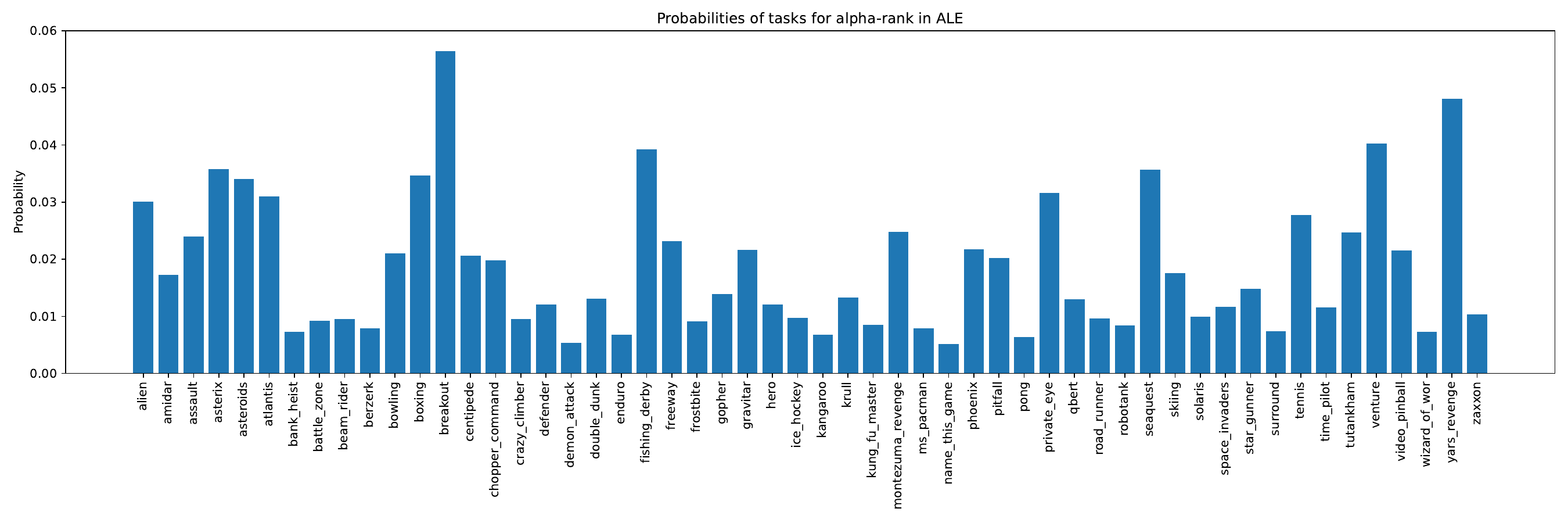}    
\end{center}
\caption{Task distribution computed by $\alpha$-Rank. \label{fig:alpharank_tasks}}
\end{figure*}

\begin{figure*}[ht!]
\begin{center}
\hspace{-5cm}\includegraphics[scale=0.7]{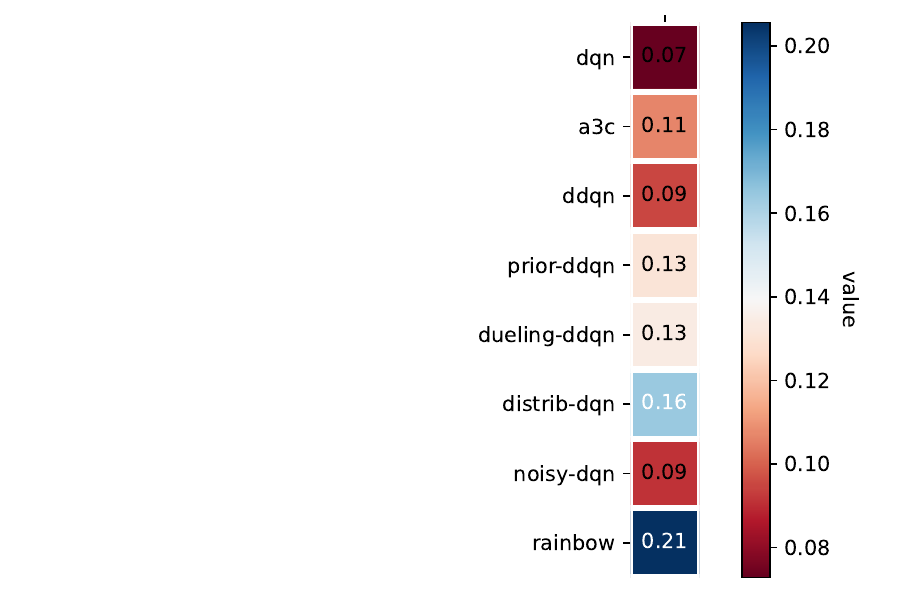}    
\caption{Agent distribution computed by $\alpha$-Rank.
\label{fig:alpharank_agents}}
\end{center}
\end{figure*}

\newpage
\clearpage

\section{Full Results on Adaptive Agents}
\label{app:ada}

\begin{figure*}[t]
\begin{tabular}{ccc}
\includegraphics[scale=0.46]{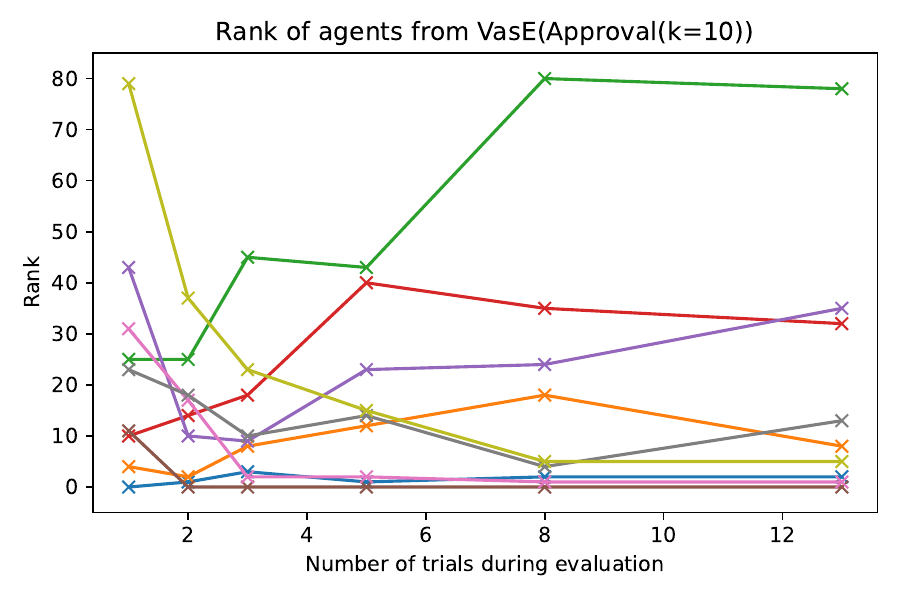} &
\includegraphics[scale=0.46]{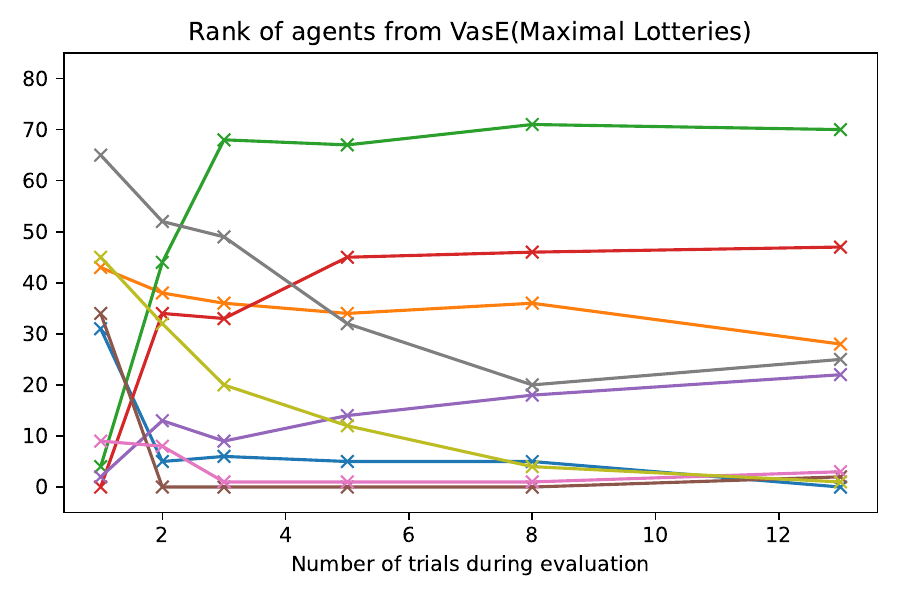} & 
\includegraphics[scale=0.32]{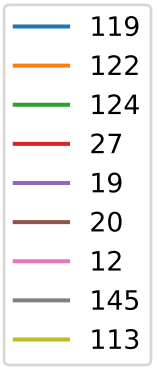}\\
  & & \\
\end{tabular}
\caption{Ranks of agents by number of test-time trials under different voting methods. A lower rank means a better agent. For $t = 1$, there is no Condorcet winner. For $t \in \{2, 3, 5, 8\}$ agent 20 is a strong Condorcet winner. At $t=13$: agent 119 is a weak Condorcet winner, Approval($k=10$)'s top three ranks are (20, 12, 119), and ML's top three are (119, 113, 20).
\label{fig:ada}}
\end{figure*}


We now apply VasE to a larger-scale environment: adaptive agents (AdA)~\citep{AdA23} in a procedurally-generated, open-ended 3D embodied task space (XLand)~\citep{XLand21}. 
Our main question: {\it How do agents compare in their ability to adapt to the task as the amount of test-time experience increases?}

We obtained a dataset from the authors' internal leaderboard consisting of $m = 150$ agents across $n = 46$ tasks.
Agents are trained via memory-based meta-RL resulting in human-timescale adaptation to new tasks. The set of agents differ by architecture and training parameters; the set of tasks differ in adaptation requirements such as experimentation, tool use, and division of labor.
During evaluation, each task is repeated for some number of successive trials $t \in \{1, 2, 3, 5, 8, 13\}$ where the environment (but {\it not} the agent memory) is reset between trials. 
A successfully adaptive agent may discover and remember relevant information and carry it to the future trials. 
The dataset is then a tensor of shape $(150, 46, 6)$ with each entry corresponding to the mean reward over 250 repetitions for that (agent, task, trials) combination.
We run six instances of VasE: one for each value of $t$, each instance of size $(150, 46)$. Figure~\ref{fig:ada} shows how the relative rank of a subset of agents changes as $t$ increases.

Our VasE framework uncovers several interesting phenomena.
First, we observe how significantly the ranks change between $t = 1$ and $t = 2$: the best agents at higher $t$ are not best at $t = 1$, possibly because the adaptive agents focus more on exploration to uncover the hidden rules of the task in early trials.
Similarly, some of the worst agents at higher values of $t$ are among the best at $t=1$, indicating a basic ability to perform on the task once but not as proficient in adaptation over several trials.
Some agents (12, 20, 119) remain among the best after $t \ge 2$ and others (113) gradually improve to among the best as $t$ increases, possibly due to different RL training techniques or memory approaches. 


We also ran Copeland on the same set of agents as in Figure~\ref{fig:ada}. The result is show in Figure~\ref{fig:ada-copeland}.

\begin{figure*}[t]
    \centering
\begin{tabular}{ccc}
\includegraphics[scale=0.65]{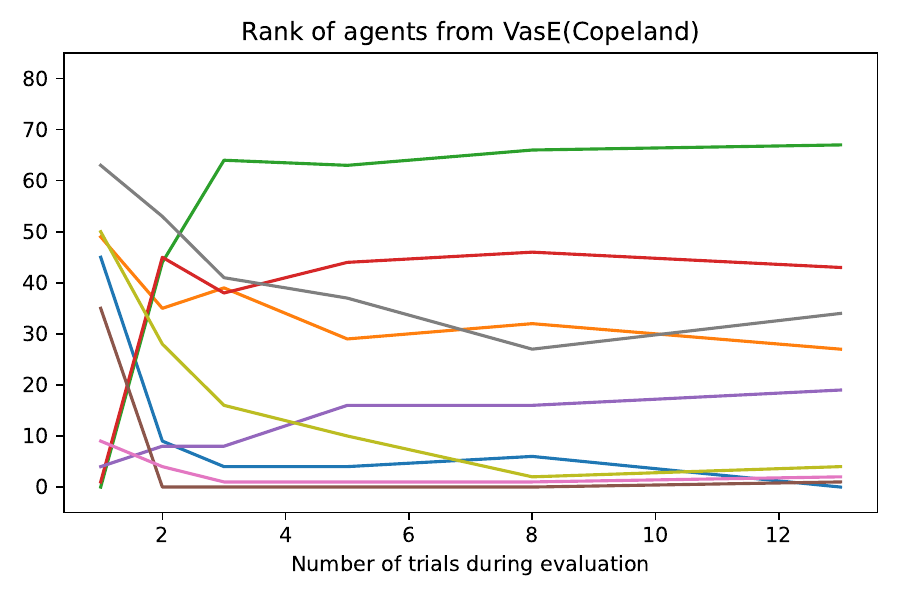}
& ~~~~ &
\includegraphics[scale=0.5]{legend.png}\\
\end{tabular}
\caption{Ranks of  agents  by  number  of  trials using the Copeland voting method. The result is similar to Iterative Maximal Lotteries from Figure~\ref{fig:ada}.}
\label{fig:ada-copeland}
\end{figure*}

In conversing with the original authors of~\citet{AdA23}, these were descriptions they sent us for the top agents:
\begin{itemize}
\item 12: An agent with a Transformer-XL memory, 169M Transformer-only parameters, (353M total parameters) and effective context length of 1800 timesteps, trained for 20 billion frames.
\item 20: A larger version of 12, trained for longer. Specifically with 256M Transformer-only parameters, (533M total parameters) trained for 46 billion frames.
\item 119: An agent that used "skip memory" as described in~\citep[Section 3.7]{AdA23}, combining a GRU and Transformer XL to increase the effective context length.
\item 113: Similar to 119 but with a shorter skip length (and therefore shorter effective context length), also trained for  fewer steps than 119.
\end{itemize}

The authors also told us that ``12 was the best agent used in the single-agent setting in \citet{AdA23}''.

Table~\ref{tab:ada-avg-ranks} shows the average ranks achieved by all the group of agents selected for Figures~\ref{fig:ada} and \ref{fig:ada-copeland} across the different number of trials. When using Copeland or Maximal Lotteries, agent 12 has had the best average rank among the chosen agents.

\begin{table*}[t!]
    \centering
    \begin{tabular}{c|ccc}
    Agent $\backslash$ Method &   Approval(k=10)  & Copeland & Iterative Maximal Lotteries \\
    \hline
     12   & 9.0       & \cellcolor{green!10} {\bf 3.0} & \cellcolor{green!10} {\bf 3.83} \\
     19   & 24.0      & 11.83     &  13.0       \\
     20   & 1.83      & 6.0       &  6.0        \\
     27   & 24.83     & 36.16     &  34.16      \\
     113  & 27.33     & 18.33     &  19.0       \\
     119  & \cellcolor{green!10} {\bf 1.5} & 11.33     &  8.66       \\
     122  & 8.66      & 35.16     &  35.83      \\
     124  & 49.33     & 50.66     &  54.0       \\
    \hline
    \end{tabular}
    \caption{Average rank for each agent over $t \in \{1, 2, 3, 5, 8, 12 \}$ (lower is better). }
    \label{tab:ada-avg-ranks}
\end{table*}

As a demonstration, the VasE results (for the top 10 agents only -- recall in this application that the number of agents, $m = 150$, and number of tasks, $n = 46$) for the number of trials $t=13$ are shown in Table~\ref{tab:ada-full}.

\begin{table*}[h!]
\vspace{3cm}
\begin{center}
\begin{tabular}{|c|ll|}
\multicolumn{3}{c}{\bf Approval(k=10)}\\
\hline
Rank & Agent & Score\\
\hline
1 & \textsc{20} & 25\\
2 & \textsc{12} & 20\\
3 & \textsc{119} & 19\\
4 & \textsc{103} & 15\\
5 & \textsc{35} & 15\\
6 & \textsc{113} & 14\\
7 & \textsc{118} & 12\\
8 & \textsc{21} & 12\\
9 & \textsc{122} & 11\\
10 & \textsc{120} & 11\\
\hline
\end{tabular}
~
\begin{tabular}{|c|ll|}
\multicolumn{3}{c}{\bf Borda}\\
\hline
Rank & Agent & Score\\
\hline
1 & \textsc{12} & 5513\\
2 & \textsc{20} & 5498\\
3 & \textsc{100} & 5396\\
4 & \textsc{118} & 5371\\
5 & \textsc{8} & 5313\\
6 & \textsc{31} & 5282\\
7 & \textsc{119} & 5270\\
8 & \textsc{112} & 5226\\
9 & \textsc{26} & 5211\\
10 & \textsc{113} & 5157\\
\hline
\end{tabular}
~
\begin{tabular}{|c|ll|}
\multicolumn{3}{c}{\bf Copeland}\\
\hline
Rank & Agent & Score\\
\hline
1 & \textsc{119} & 148.5\\
2 & \textsc{20} & 148.0\\
3 & \textsc{12} & 147.0\\
4 & \textsc{8} & 143.5\\
5 & \textsc{113} & 142.5\\
6 & \textsc{102} & 142.0\\
7 & \textsc{35} & 141.5\\
8 & \textsc{31} & 141.5\\
9 & \textsc{103} & 141.0\\
10 & \textsc{118} & 140.0\\
\hline
\end{tabular}\\

\vspace{0.5cm}

\begin{tabular}{|c|ll|}
\multicolumn{3}{c}{\bf Iterative Maximal Lotteries}\\
\hline
Rank & Agent & Score\\
\hline
1 & \textsc{119} & 64.96\\
2 & \textsc{113} & 64.04\\
3 & \textsc{20} & 64.00\\
4 & \textsc{12} & 63.00\\
5 & \textsc{8} & 61.53\\
6 & \textsc{103} & 61.19\\
7 & \textsc{102} & 61.14\\
8 & \textsc{31} & 61.14\\
9 & \textsc{35} & 60.98\\
10 & \textsc{30} & 60.02\\
\hline
\end{tabular}
~
\begin{tabular}{|c|ll|}
\multicolumn{3}{c}{\bf Plurality}\\
\hline
Rank & Agent & Score\\
\hline
1 & \textsc{119} & 8\\
2 & \textsc{113} & 4\\
3 & \textsc{145} & 3\\
4 & \textsc{103} & 3\\
5 & \textsc{121} & 2\\
6 & \textsc{118} & 2\\
7 & \textsc{35} & 2\\
8 & \textsc{26} & 2\\
9 & \textsc{20} & 2\\
10 & \textsc{146} & 1\\
\hline
\end{tabular}
~
\begin{tabular}{|c|ll|}
\multicolumn{3}{c}{\bf Ranked Pairs}\\
\hline
Rank & Agent & Score\\
\hline
1 & \textsc{119} & 244818\\
2 & \textsc{20} & 241132\\
3 & \textsc{12} & 236988\\
4 & \textsc{8} & 232798\\
5 & \textsc{102} & 228972\\
6 & \textsc{113} & 225482\\
7 & \textsc{35} & 221996\\
8 & \textsc{103} & 218860\\
9 & \textsc{31} & 215936\\
10 & \textsc{118} & 212186\\
\hline
\end{tabular}\\

\vspace{0.5cm}

\begin{tabular}{|c|ll|}
\multicolumn{3}{c}{\bf Schulze}\\
\hline
Rank & Agent & Score\\
\hline
1 & \textsc{119} & 3717\\
2 & \textsc{20} & 3693\\
3 & \textsc{12} & 3663\\
4 & \textsc{8} & 3631\\
5 & \textsc{102} & 3607\\
6 & \textsc{113} & 3581\\
7 & \textsc{35} & 3556\\
8 & \textsc{31} & 3534\\
9 & \textsc{100} & 3505\\
10 & \textsc{103} & 3480\\
\hline
\end{tabular}
~
\begin{tabular}{|c|ll|}
\multicolumn{3}{c}{\bf STV(num winners = 10)}\\
\hline
Rank & Agent & Score\\
\hline
1 & \textsc{119} & 300.8\\
2 & \textsc{20} & 299.5\\
3 & \textsc{103} & 298.5\\
4 & \textsc{145} & 297.5\\
5 & \textsc{113} & 296.5\\
6 & \textsc{21} & 295.6\\
7 & \textsc{121} & 294.5\\
8 & \textsc{35} & 293.6\\
9 & \textsc{143} & 292.6\\
10 & \textsc{33} & 150.3\\
\hline
\end{tabular}
\end{center}
\caption{VasE Results on the Adaptive Agents (\citep{AdA23}) leaderboard dataset. \label{tab:ada-full}}
\end{table*}

\newpage
\clearpage

\section{Evaluation of Large Language Models}
\label{app:language_models}

Here we begin to describe several applications of VasE to the evaluation of large language models.

In total, there are six applications on three data sources: Chatbot Arena~\citep{zheng2023judging}, AgentBench~\citep{liu2023agentbench}, our own prompt engineering task for text summarization and Q\&A, and three separate applications on the Holistic Evaluation of Language Models (HELM) data~\citep{liang2022holistic}.

\section{Full Results on AgentBench}
\label{app:summarization_domain}

The Elo ratings for agents in AgentBench are shown in Table~\ref{tab:elo-agentbench}.

As an example visualization that is possible with the VasE results, we show the subgraph of the top eight agents computed by the Ranked Pairs method in Figure~\ref{fig:agentbench-rps-graph}. A similar visualization could be computed for the graph constructed by the Schulze method.

\begin{table}[ht!]
\begin{center}
\begin{tabular}{lcc}
Model & Elo & AgentBench (OA) score\\
\hline
gpt-4                            &  1282   &  4.41\\
claude                           &   999   &  2.77\\
gpt-3.5-turbo                    &   992   &  2.55\\
text-davinci-003                 &   879   &   2.1\\
chatglm2                         &   692   &  1.31\\
claude-instant                   &   640   &   1.9\\
openchat-13b                     &   540   &  1.15\\
text-davinci-002                 &   525   &  1.46\\
t-bison-001                      &   496   &  1.39\\
wizardlm-30b                     &   491   &  0.83\\
llama2-13b-chat                  &   384   &  0.55\\
wizardlm-13b                     &   369   &  0.59\\
vicuna-13b                       &   361   &  0.62\\
codegeex2-6b                     &   354   &  0.53\\
openchat-8192-13b                &   336   &  0.51\\
baichuan-13b-chat                &   250   &  0.36\\
koala-13b                        &   227   &  0.34\\
llama2-7b-chat                   &   224   &  0.31\\
chatglm-6b                       &   209   &  0.31\\
vicuna-7b                        &   182   &  0.24\\
baichuan-7b                      &   182   &  0.22\\
wizardcoder-15b                  &   146   &  0.21\\
internlm-chat-7b                 &   125   &  0.23\\
dolly-v2-12b                     &   104   &  0.15\\
oasst-sft-4-pythia-12b           &     0   &  0.07\\
\end{tabular}
\end{center}
\caption{Elo values (relative to bottom agent who is defined to have 0 Elo) for agents in AgentBench benchmark evaluation. \label{tab:elo-agentbench}}
\end{table}

\begin{figure*}
\centering
\includegraphics[scale=0.5]{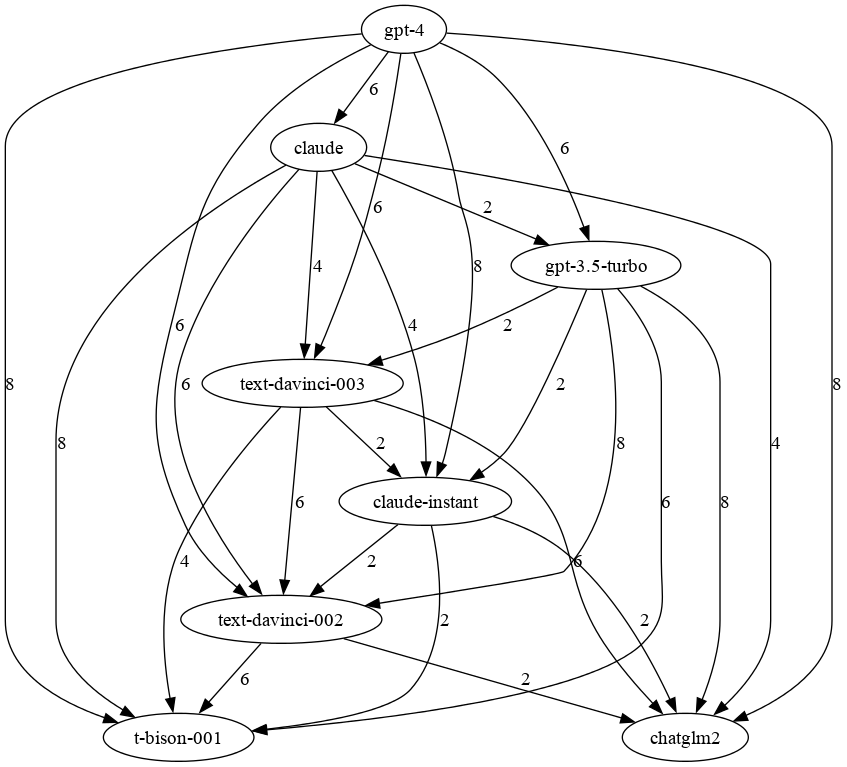}
\caption{The directed subgraph containing the top eight agents found by the Ranked Pairs method on the AgentBench data. Edge weights to the margin $\delta(x,y)$ for directed edge $x \rightarrow y$. From this visualization, see how strong (in number of votes (tasks)) each win or loss is against all neighbors.}
\label{fig:agentbench-rps-graph}
\end{figure*}

Full results for VasE methods are shown in Tables~\ref{tab:vase-agentbench-full1},
\ref{tab:vase-agentbench-full2}, 
\ref{tab:vase-agentbench-full3}, and \ref{tab:vase-agentbench-full4}.

\begin{table*}[t!]
\begin{center}
\begin{tabular}{|c|ll|}
\multicolumn{3}{c}{\bf Approval(k=8)}\\
\hline
Rank & Agent & Score\\
\hline
1 & {\tt gpt-4} & 8\\
2 & {\tt claude} & 8\\
3 & {\tt gpt-3.5-turbo} & 8\\
4 & {\tt text-davinci-003} & 7\\
5 & {\tt chatglm2} & 7\\
6 & {\tt claude-instant} & 6\\
7 & {\tt text-davinci-002} & 5\\
8 & {\tt t-bison-001} & 5\\
9 & {\tt openchat-13b} & 4\\
10 & {\tt wizardlm-30b} & 1\\
11 & {\tt vicuna-13b} & 1\\
12 & {\tt wizardlm-13b} & 1\\
13 & {\tt llama2-13b-chat} & 1\\
14 & {\tt codegeex2-6b} & 1\\
15 & {\tt chatglm-6b} & 1\\
16 & {\tt openchat-8192-13b} & 0\\
17 & {\tt baichuan-13b-chat} & 0\\
18 & {\tt koala-13b} & 0\\
19 & {\tt llama2-7b-chat} & 0\\
20 & {\tt vicuna-7b} & 0\\
21 & {\tt internlm-chat-7b} & 0\\
22 & {\tt baichuan-7b} & 0\\
23 & {\tt wizardcoder-15b} & 0\\
24 & {\tt dolly-v2-12b} & 0\\
25 & {\tt oasst-sft-4-pythia-12b} & 0\\
\hline
\end{tabular}
~
\begin{tabular}{|c|ll|}
\multicolumn{3}{c}{\bf Borda}\\
\hline
Rank & Agent & Score\\
\hline
1 & {\tt gpt-4} & 188\\
2 & {\tt claude} & 174\\
3 & {\tt gpt-3.5-turbo} & 173\\
4 & {\tt text-davinci-003} & 164\\
5 & {\tt chatglm2} & 146\\
6 & {\tt claude-instant} & 138\\
7 & {\tt text-davinci-002} & 120\\
8 & {\tt t-bison-001} & 115\\
9 & {\tt openchat-13b} & 115\\
10 & {\tt wizardlm-30b} & 111\\
11 & {\tt llama2-13b-chat} & 99\\
12 & {\tt vicuna-13b} & 94\\
13 & {\tt codegeex2-6b} & 89\\
14 & {\tt wizardlm-13b} & 87\\
15 & {\tt openchat-8192-13b} & 83\\
16 & {\tt llama2-7b-chat} & 67\\
17 & {\tt baichuan-13b-chat} & 66\\
18 & {\tt koala-13b} & 62\\
19 & {\tt chatglm-6b} & 57\\
20 & {\tt vicuna-7b} & 51\\
21 & {\tt baichuan-7b} & 51\\
22 & {\tt dolly-v2-12b} & 40\\
23 & {\tt wizardcoder-15b} & 38\\
24 & {\tt internlm-chat-7b} & 37\\
25 & {\tt oasst-sft-4-pythia-12b} & 35\\
\hline
\end{tabular}
\end{center}
\caption{Results for Approval($k=8$) and Borda methods on AgentBench. \label{tab:vase-agentbench-full1}}
\end{table*}

\begin{table*}[t!]
\begin{center}
\begin{tabular}{|c|ll|}
\multicolumn{3}{c}{\bf Copeland}\\
\hline
Rank & Agent & Score\\
\hline
1 & {\tt gpt-4} & 24.0\\
2 & {\tt claude} & 23.0\\
3 & {\tt gpt-3.5-turbo} & 22.0\\
4 & {\tt text-davinci-003} & 21.0\\
5 & {\tt claude-instant} & 20.0\\
6 & {\tt text-davinci-002} & 19.0\\
7 & {\tt chatglm2} & 17.5\\
8 & {\tt t-bison-001} & 16.5\\
9 & {\tt openchat-13b} & 16.5\\
10 & {\tt wizardlm-30b} & 15.0\\
11 & {\tt llama2-13b-chat} & 13.0\\
12 & {\tt codegeex2-6b} & 12.5\\
13 & {\tt vicuna-13b} & 11.5\\
14 & {\tt openchat-8192-13b} & 11.0\\
15 & {\tt wizardlm-13b} & 10.5\\
16 & {\tt baichuan-13b-chat} & 8.0\\
17 & {\tt koala-13b} & 8.0\\
18 & {\tt llama2-7b-chat} & 8.0\\
19 & {\tt vicuna-7b} & 6.0\\
20 & {\tt chatglm-6b} & 5.5\\
21 & {\tt baichuan-7b} & 4.5\\
22 & {\tt dolly-v2-12b} & 2.5\\
23 & {\tt internlm-chat-7b} & 2.0\\
24 & {\tt oasst-sft-4-pythia-12b} & 2.0\\
25 & {\tt wizardcoder-15b} & 0.5\\
\hline
\end{tabular}
~
\begin{tabular}{|c|ll|}
\multicolumn{3}{c}{\bf Iterative Maximal Lotteries}\\
\hline
Rank & Agent & Score\\
\hline
1 & {\tt gpt-4} & 17.00\\
2 & {\tt claude} & 16.00\\
3 & {\tt gpt-3.5-turbo} & 15.00\\
4 & {\tt text-davinci-003} & 14.00\\
5 & {\tt claude-instant} & 13.0\\
6 & {\tt text-davinci-002} & 12.00\\
7 & {\tt chatglm2} & 10.56\\
8 & {\tt t-bison-001} & 10.43\\
9 & {\tt openchat-13b} & 10.00\\
10 & {\tt wizardlm-30b} & 9.00\\
11 & {\tt codegeex2-6b} & 7.66\\
12 & {\tt llama2-13b-chat} & 7.34\\
13 & {\tt vicuna-13b} & 6.52\\
14 & {\tt openchat-8192-13b} & 6.48\\
15 & {\tt wizardlm-13b} & 5.53\\
16 & {\tt baichuan-13b-chat} & 5.47\\
17 & {\tt vicuna-7b} & 4.50\\
18 & {\tt koala-13b} & 4.36\\
19 & {\tt llama2-7b-chat} & 4.13\\
20 & {\tt baichuan-7b} & 3.57\\
21 & {\tt chatglm-6b} & 3.42\\
22 & {\tt dolly-v2-12b} & 2.52\\
23 & {\tt oasst-sft-4-pythia-12b} & 2.48\\
24 & {\tt internlm-chat-7b} & 2.00\\
25 & {\tt wizardcoder-15b} & 1.00\\
\hline
\end{tabular}
\end{center}
\caption{Results for Copeland and Iterative Maximal Lotteries methods on AgentBench. \label{tab:vase-agentbench-full2}}
\end{table*}

\begin{table*}[t!]
\begin{center}
\begin{tabular}{|c|ll|}
\multicolumn{3}{c}{\bf Plurality}\\
\hline
Rank & Agent & Score\\
\hline
1 & {\tt gpt-4} & 7\\
2 & {\tt gpt-3.5-turbo} & 1\\
3 & {\tt claude} & 0\\
4 & {\tt text-davinci-003} & 0\\
5 & {\tt claude-instant} & 0\\
6 & {\tt text-davinci-002} & 0\\
7 & {\tt t-bison-001} & 0\\
8 & {\tt chatglm2} & 0\\
9 & {\tt openchat-13b} & 0\\
10 & {\tt wizardlm-30b} & 0\\
11 & {\tt vicuna-13b} & 0\\
12 & {\tt wizardlm-13b} & 0\\
13 & {\tt llama2-13b-chat} & 0\\
14 & {\tt codegeex2-6b} & 0\\
15 & {\tt openchat-8192-13b} & 0\\
16 & {\tt baichuan-13b-chat} & 0\\
17 & {\tt koala-13b} & 0\\
18 & {\tt llama2-7b-chat} & 0\\
19 & {\tt chatglm-6b} & 0\\
20 & {\tt vicuna-7b} & 0\\
21 & {\tt internlm-chat-7b} & 0\\
22 & {\tt baichuan-7b} & 0\\
23 & {\tt wizardcoder-15b} & 0\\
24 & {\tt dolly-v2-12b} & 0\\
25 & {\tt oasst-sft-4-pythia-12b} & 0\\
\hline
\end{tabular}
~~~
\begin{tabular}{|c|ll|}
\multicolumn{3}{c}{\bf Ranked Pairs}\\
\hline
Rank & Agent & Score\\
\hline
1 & {\tt gpt-4} & 1516\\
2 & {\tt claude} & 1332\\
3 & {\tt gpt-3.5-turbo} & 1170\\
4 & {\tt text-davinci-003} & 1008\\
5 & {\tt claude-instant} & 860\\
6 & {\tt text-davinci-002} & 760\\
7 & {\tt t-bison-001} & 480\\
8 & {\tt chatglm2} & 612\\
9 & {\tt openchat-13b} & 482\\
10 & {\tt wizardlm-30b} & 408\\
11 & {\tt llama2-13b-chat} & 246\\
12 & {\tt vicuna-13b} & 140\\
13 & {\tt codegeex2-6b} & 214\\
14 & {\tt openchat-8192-13b} & 170\\
15 & {\tt wizardlm-13b} & 130\\
16 & {\tt koala-13b} & 84\\
17 & {\tt baichuan-13b-chat} & 68\\
18 & {\tt llama2-7b-chat} & 52\\
19 & {\tt chatglm-6b} & 18\\
20 & {\tt vicuna-7b} & 26\\
21 & {\tt baichuan-7b} & 20\\
22 & {\tt dolly-v2-12b} & 4\\
23 & {\tt internlm-chat-7b} & 2\\
24 & {\tt oasst-sft-4-pythia-12b} & 2\\
25 & {\tt wizardcoder-15b} & 0\\
\hline
\end{tabular}
\end{center}
\caption{Results for plurality and Ranked Pairs methods on AgentBench. \label{tab:vase-agentbench-full3}}
\end{table*}

\vspace{0.5cm}

\begin{table*}[t!]
\begin{center}
\begin{tabular}{|c|ll|}
\multicolumn{3}{c}{\bf Schulze}\\
\hline
Rank & Agent & Score\\
\hline
1 & {\tt gpt-4} & 123\\
2 & {\tt claude} & 116\\
3 & {\tt gpt-3.5-turbo} & 111\\
4 & {\tt text-davinci-003} & 106\\
5 & {\tt claude-instant} & 101\\
6 & {\tt text-davinci-002} & 96\\
7 & {\tt t-bison-001} & 89\\
8 & {\tt chatglm2} & 85\\
9 & {\tt openchat-13b} & 80\\
10 & {\tt wizardlm-30b} & 75\\
11 & {\tt llama2-13b-chat} & 69\\
12 & {\tt vicuna-13b} & 64\\
13 & {\tt codegeex2-6b} & 60\\
14 & {\tt openchat-8192-13b} & 55\\
15 & {\tt wizardlm-13b} & 50\\
16 & {\tt koala-13b} & 44\\
17 & {\tt baichuan-13b-chat} & 39\\
18 & {\tt llama2-7b-chat} & 34\\
19 & {\tt chatglm-6b} & 29\\
20 & {\tt vicuna-7b} & 25\\
21 & {\tt baichuan-7b} & 20\\
22 & {\tt dolly-v2-12b} & 14\\
23 & {\tt internlm-chat-7b} & 9\\
24 & {\tt oasst-sft-4-pythia-12b} & 5\\
25 & {\tt wizardcoder-15b} & 0\\
\hline
\end{tabular}
~~~
\begin{tabular}{|c|ll|}
\multicolumn{3}{c}{\bf STV($k=8$)}\\
\hline
Rank & Agent & Score\\
\hline
1 & {\tt gpt-4} & 50.7\\
2 & {\tt claude} & 49.4\\
3 & {\tt gpt-3.5-turbo} & 48.2\\
4 & {\tt text-davinci-003} & 47.2\\
5 & {\tt claude-instant} & 46.2\\
6 & {\tt text-davinci-002} & 45.2\\
7 & {\tt t-bison-001} & 44.1\\
8 & {\tt openchat-13b} & 43.1\\
9 & {\tt chatglm2} & 25.0\\
10 & {\tt wizardlm-30b} & 24.0\\
11 & {\tt vicuna-13b} & 23.0\\
12 & {\tt wizardlm-13b} & 22.0\\
13 & {\tt llama2-13b-chat} & 21.0\\
14 & {\tt codegeex2-6b} & 20.0\\
15 & {\tt openchat-8192-13b} & 19.0\\
16 & {\tt baichuan-13b-chat} & 18.0\\
17 & {\tt koala-13b} & 17.0\\
18 & {\tt llama2-7b-chat} & 16.0\\
19 & {\tt chatglm-6b} & 15.0\\
20 & {\tt vicuna-7b} & 14.0\\
21 & {\tt internlm-chat-7b} & 13.0\\
22 & {\tt baichuan-7b} & 12.0\\
23 & {\tt wizardcoder-15b} & 11.0\\
24 & {\tt dolly-v2-12b} & 10.0\\
25 & {\tt oasst-sft-4-pythia-12b} & 9.0\\
\hline
\end{tabular}
\end{center}
\caption{Results for Schulze and STV($k=8$) methods on AgentBench. \label{tab:vase-agentbench-full4}}
\end{table*}

\clearpage

\clearpage

\section{VasE Analysis on Chatbot Arena Dataset}

In this section, we apply VasE to the Chatbot Arena conversation dataset~\citep{zheng23chatbot_arena_dataset}\footnote{Dataset available at \url{https://lmsys.org/blog/2023-07-20-dataset/}, released on July 10th, 2023. Updated leaderboard being hosted at \url{https://huggingface.co/spaces/lmsys/chatbot-arena-leaderboard}}. This dataset contains 33000 crowd-source conversations between with human preference annotations from Chatbot Arena\footnote{\url{https://lmsys.org/blog/2023-05-03-arena/}}.

The data set contains conversations between 20 different models, containing some overlap with models in AgentBench.
Each datapoint contains binary comparisons (win, loss, ties) between two models, hence our application of VasE focuses on Condorcet voting methods.

Results for VasE are shown in Tables \ref{tab:vase-chatbot-arena-full1} and \ref{tab:vase-chatbot-arena-full2}.
The rankings are mostly consistent with the Elo ratings found in Chatbot Arena, but we identify a two interesting findings: IML finds some game-theoretic cycles and comparison of the Ranked Pairs graph with AgentBench.

\subsection{Comparing IML and other Condorcet Methods} 
\begin{figure}[t]
\begin{center}
$\begin{bmatrix}
 0  & 0   & 20   &  32 &  24 & -2  & 20 & 14  & 39 \\
 0  & 0   & 33   &  87 &  17 & -7  & 67 & -8  & 50 \\
-20 & -33 &  0   &  25 &  -7 &  2  & 10 & -28 & 12 \\
-32 & -87 & -25  &   0 & -21 & -3  & 26 & -54 & -11\\
-24 & -17 &  7   &  21 &   0 & -13 & 21 & -27 & 27\\
  2 &  7  & -2   &   3 &  13 &  0  &  1 &  6  & 5\\
-20 & -67 & -10  & -26 & -21 & -1  &  0 & -65 & -26\\
-14 &  8  & 28   &  54 &  27 & -6  & 65 &  0  & 48\\
-39 & -50 & -12  &  11 & -27 & -5  & 26 & -48 &  0\\
\end{bmatrix}$
\end{center}
\caption{Margin subgame solved by IML when there are 8 remaining alternatives. Entries represent $\delta(x, y)$ for agent on row $x$ and column $y$. Agents {\tt RWKV-4-Raven-14B}, {\tt chatglm}, and {\tt gpt4all-13b-snoozy} correspond to agents on the first, third, and sixth rows and columns, respectively. \label{fig:iml-chatbot-arena-subgame}}
\end{figure}

Here, we show an instance where IML identifies a game-theoretic cycle whereas the other Condorcet methods rank the alternatives differently. Looking at Table~\ref{tab:vase-chatbot-arena-full}, ranks 12-20: Copeland and Ranked Pairs and Schulze rank the following three agents according to:\\

\noindent {\tt gpt4all-13b-snoozy} $\succ$ {\tt RWKV-4-Raven-14B} $\succ$ {\tt chatglm-6b},\\

\noindent whereas IML ranks {\tt gpt4all-13b-snoozy} above {\tt RWKV-4-Raven-14B} and {\tt chatglm}, but all three within the same level (identifying a game-theoretic cycle).

Looking more closely, we see that IML find a distribution with probabilities $0.833$ on {\tt gpt4all-13b-snoozy}, $0.0833$ on {\tt RWKV-4-Raven-14B}, and $0.0833$ on {\tt chatglm-6b}, and a margin subgame depicted in Figure~\ref{fig:iml-chatbot-arena-subgame}.
This distribution leads to the following expected values to the column player:\\

$~~~~~0.8333~~[2 ~~ 7 ~~ -2 ~~ 3 ~~ 13 ~~ 0 ~~ 1 ~~ 6 ~~ 5 ]$\\

$~~~+0.0833~~[0 ~~ 0 ~~ 20 ~~ 32 ~~ 24 ~~ -2 ~~ 20 ~~ 14 ~~ 39 ]$\\

$~~~+0.0833 ~~ [-20 ~~ -33 ~~ 0 ~~ 25 ~~ -7 ~~ 2 ~~ 10 ~~ -28 ~~ 12 ]$\\

$=~~ [0~~3.0821~~0~~7.2471~~12.2451~~0~~3.332~~3.8318~~  8.4133]$.\\

\noindent When the column (minimizing) player plays a best response here they then achieve a value of 0, which is the equilibrium value (both players receive 0 in expectation). 

From this analysis, it is also easy to see that choosing any one of the three deterministically is not an equilibrium strategy.
If, instead, the row player chose only one among these three strategies to play with probability 1, the column player would have a better response in each case and force the row player to receive strictly lower payoff (\i.e. -2 if the row player chooses either {\tt gpt4all-13b-snoozy} or {\tt RWKV-4-Raven-14B}, or -33 of the row player chooses {\tt chatglm-6b}. Hence, IML is choosing a safer maximal lottery due to the adversarial nature of the margin subgame, identifying three strategies and a mixed distribution at this level.

\subsection{Visualization and Comparison to Agent Bench}

We also show a visualization of the Ranked Pairs graph on this dataset in Figure~\ref{fig:chatbot_arena_rps_graph}. Due to the much larger dataset, the edge weights are significantly more granular in this case than on AgentBench. However, we do see some similarities: top agents are similarly-ranked. In both cases {\tt gpt-4} $\succ$ {\tt claude} $\succ$ {\tt gpt-3.5-turbo}. This is a remarkable consistency given that the benchmarks test significantly different tasks, and one comes from score metrics and the other from crowd-sourced human annotations. 

\begin{table*}
\begin{center}
\begin{tabular}{|c|ll|}
\multicolumn{3}{c}{\bf Copeland}\\
\hline
Rank & Agent & Score\\
\hline
1 & {\tt gpt-4} & 19.0\\
2 & {\tt claude-v1} & 18.0\\
3 & {\tt claude-instant-v1} & 17.0\\
4 & {\tt guanaco-33b} & 16.0\\
5 & {\tt gpt-3.5-turbo} & 15.0\\
6 & {\tt wizardlm-13b} & 13.5\\
7 & {\tt palm-2} & 13.0\\
8 & {\tt vicuna-13b} & 12.0\\
9 & {\tt vicuna-7b} & 11.0\\
10 & {\tt koala-13b} & 10.5\\
11 & {\tt mpt-7b-chat} & 9.0\\
12 & {\tt gpt4all-13b-snoozy} & 7.0\\
13 & {\tt RWKV-4-Raven-14B} & 6.5\\
14 & {\tt oasst-pythia-12b} & 6.0\\
15 & {\tt alpaca-13b} & 5.5\\
16 & {\tt chatglm-6b} & 4.0\\
17 & {\tt fastchat-t5-3b} & 4.0\\
18 & {\tt stablelm-tuned-alpha-7b} & 2.0\\
19 & {\tt dolly-v2-12b} & 1.0\\
20 & {\tt llama-13b} & 0.0\\
\hline
\end{tabular}\\

\vspace{0.5cm}

\begin{tabular}{|c|ll|}
\multicolumn{3}{c}{\bf Iterative Maximal Lotteries}\\
\hline
Rank & Agent & Score\\
\hline
1 & {\tt gpt-4} & 17.00\\
2 & {\tt claude-v1} & 16.00\\
3 & {\tt claude-instant-v1} & 15.00\\
4 & {\tt guanaco-33b} & 14.00\\
5 & {\tt gpt-3.5-turbo} & 13.00\\
6 & {\tt wizardlm-13b} & 11.997\\
7 & {\tt koala-13b} & 11.003\\
8 & {\tt palm-2} & 11.00\\
9 & {\tt vicuna-13b} & 10.00\\
10 & {\tt vicuna-7b} & 9.00\\
11 & {\tt mpt-7b-chat} & 8.00\\
12 & {\tt gpt4all-13b-snoozy} & 6.83\\
13 & {\tt RWKV-4-Raven-14B} & 6.083\\
14 & {\tt chatglm-6b} & 6.083\\
15 & {\tt oasst-pythia-12b} & 6.00\\
16 & {\tt alpaca-13b} & 5.00\\
17 & {\tt fastchat-t5-3b} & 4.00\\
18 & {\tt stablelm-tuned-alpha-7b} & 3.00\\
19 & {\tt dolly-v2-12b} & 2.00\\
20 & {\tt llama-13b} & 1.00\\
\hline
\end{tabular}
\end{center}
\caption{VasE results for Copeland and IML methods on the Chatbot Arena conversation dataset. \label{tab:vase-chatbot-arena-full1}}
\end{table*}

\begin{table*}
\begin{center}
\begin{tabular}{|c|ll|}
\multicolumn{3}{c}{\bf Ranked Pairs}\\
\hline
Rank & Agent & Score\\
\hline
1 & {\tt gpt-4} & 11624\\
2 & {\tt claude-v1} & 9220\\
3 & {\tt claude-instant-v1} & 7421\\
4 & {\tt guanaco-33b} & 6350\\
5 & {\tt gpt-3.5-turbo} & 6146\\
6 & {\tt wizardlm-13b} & 4405\\
7 & {\tt palm-2} & 4207\\
8 & {\tt vicuna-13b} & 3721\\
9 & {\tt vicuna-7b} & 2126\\
10 & {\tt koala-13b} & 1801\\
11 & {\tt mpt-7b-chat} & 963\\
12 & {\tt gpt4all-13b-snoozy} & 856\\
13 & {\tt RWKV-4-Raven-14B} & 819\\
14 & {\tt oasst-pythia-12b} & 670\\
15 & {\tt alpaca-13b} & 440\\
16 & {\tt fastchat-t5-3b} & 186\\
17 & {\tt chatglm-6b} & 110\\
18 & {\tt stablelm-tuned-alpha-7b} & 63\\
19 & {\tt dolly-v2-12b} & 26\\
20 & {\tt llama-13b} & 0\\
\hline
\end{tabular}\\

\vspace{0.5cm}

\begin{tabular}{|c|ll|}
\multicolumn{3}{c}{\bf Schulze}\\
\hline
Rank & Agent & Score\\
\hline
1 & {\tt gpt-4} & 1190\\
2 & {\tt claude-v1} & 1071\\
3 & {\tt claude-instant-v1} & 1021\\
4 & {\tt guanaco-33b} & 969\\
5 & {\tt gpt-3.5-turbo} & 939\\
6 & {\tt wizardlm-13b} & 896\\
7 & {\tt palm-2} & 859\\
8 & {\tt vicuna-13b} & 784\\
9 & {\tt vicuna-7b} & 713\\
10 & {\tt koala-13b} & 639\\
11 & {\tt mpt-7b-chat} & 541\\
12 & {\tt gpt4all-13b-snoozy} & 518\\
13 & {\tt RWKV-4-Raven-14B} & 499\\
14 & {\tt oasst-pythia-12b} & 419\\
15 & {\tt alpaca-13b} & 304\\
16 & {\tt fastchat-t5-3b} & 221\\
17 & {\tt chatglm-6b} & 170\\
18 & {\tt stablelm-tuned-alpha-7b} & 120\\
19 & {\tt dolly-v2-12b} & 63\\
20 & {\tt llama-13b} & 0\\
\hline
\end{tabular}
\end{center}
\caption{VasE results for Ranked Pairs and Schulze methods on the Chatbot Arena conversation dataset. \label{tab:vase-chatbot-arena-full2}}
\end{table*}

\begin{figure*}
\centering
\includegraphics[scale=0.5]{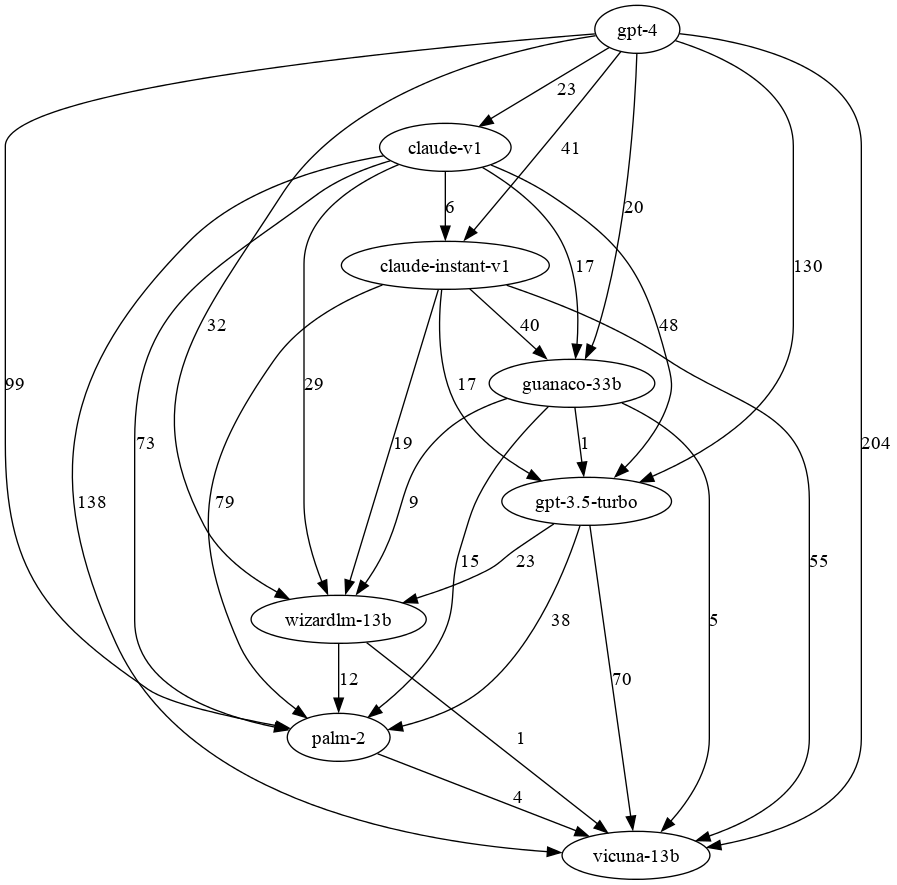}
\caption{The directed subgraph containing the top eight agents found by the Ranked Pairs method on the Chatbot Arena data. Edge weights to the margin $\delta(x,y)$ for directed edge $x \rightarrow y$. From this visualization, see how strong (in number of votes) each win or loss is against all neighbors.}
\label{fig:chatbot_arena_rps_graph}
\end{figure*}


\section{Automated Prompt Evaluation for Language Model Task Summarization}
\label{app:summarization_domain}


We apply our evaluation method to automate the process of selecting task prompts for large language models (LLMs). LLMs achieve high performance on a wide range of language tasks, ranging from sentiment analysis or entity recognition to summarization and paraphrasing. However, applying LLMs to such tasks requires choosing a {\it prompt}, a textual description of the task. The choice of the prompt can have a high impact on performance; as a result, researchers and practitioners spend considerable effort on experimenting with various prompts. Further, a good prompt for one domain may be less suitable for another domain, which requires matching or tailoring prompts to each domain. We now show how our evaluation techniques enable automating the prompt selection process.

We consider a summarization task where a long piece of text (example shown in Figure~\ref{fig:tourist_lms_long_text_example}) must be summarized into a short piece of text capturing the most pertinent information of the original text. We create a set of $t$ tasks, where each task consists of $n$ pieces of text that must be summarized. To evaluate the quality of the summary, each piece of text $r_i$ is associated with $q$ multiple choice question that must be answered given the summary. Importantly, these questions are only available at evaluation time, so the process generating the summary has no access to the evaluation questions when generating the summary. 

\begin{figure*}[t]
\begin{center}
\begin{tabular}{| l |} 
 \hline
 The city of Alton is a small, peaceful village in the middle of the country. It has a population\\
 of around 20,000 people. The village has a lot of art galleries, and there are many museums.\\
 Many people from other cities visit Alton in order to enjoy the artistic and cultural life.\\
 The main attraction of the city is the maze of mirrors. \\
 \hline
 Question 1: What is Alton famous for?  \\
 a. Its location  b. Its art and culture  c. The underground caves  d. The ocean \\
Answer 1: b. Its art and culture \\

Question 2: What does Alton have many of?  a. Schools  b. Museums  c. Factories  d. Cars \\
Answer 2: b. Museums \\

Question 3: What is the main attraction in Alton?  a. the underground caves  b. the maze of\\
mirrors  c. the jungle  d. the ocean \\
Answer 3: b. the maze of mirrors \\
 \hline
\end{tabular}
\end{center}
\caption{Example tourist destination page (long text). \label{fig:tourist_lms_long_text_example}}
\end{figure*}

More concretely, the LLM task description is depicted in Figure~\ref{fig:tourist_lms_task_desc}.

\begin{figure*}[t]
\begin{center}
\begin{tabular}{| l |} 

 \hline
A wise and intelligent assistant is summarizing entries regarding tourist destinations, trying to\\
generate shorter summaries which only include the most important information. \\
<specific prompt> \\
Here are example entries and summaries. \\
Entry 1: <example full text1> \\
Summary 1: <example summary 1> \\
Entry 2: <example full text 2> \\
Summary 2: <example summary 2> \\
Entry 3: \\
\hline
\end{tabular}
\end{center}
\caption{Example task description for the LLM. \label{fig:tourist_lms_task_desc}}
\end{figure*}

In the template shown in Figure~\ref{fig:tourist_lms_task_desc}, $<$example full text$_i$$>$ and $<$example summary $_i$$>$ are manually created full text descriptions of a destination and a manually created summary text for that destination. The prompts to be evaluated are injected into the above template in the $<$specific prompt$>$ location. In our evaluation we consider the following prompts (clearly many other prompts can be evaluated in a similar way). 

\begin{figure*}[t]
\begin{center}
\begin{tabular}{|l|} 
\hline
1. The assistant keeps almost the entire original text, changing very few words.\\

2. The assistant keeps only the most important information, feeling free to remove boring parts.\\

3. The assistant doesn't remove any of the information, and keeps pretty much all the details.\\

4. The assistant keeps any information that may be interesting for tourists, but skips the.\\
~~~~~remaining parts.\\
5. The assistant sticks to the facts in the summary, and ignores whatever is subjective evaluation.\\

6. The assistant uses simple words.\\

7. The assistant tends to use complex and impressive language.\\
\hline
\end{tabular}
\end{center}
\caption{Prompts (agents) used in prompt evaluation for summarization. \label{fig:prompts}}
\end{figure*}

Our analysis is based on pieces of text about made-up tourist destinations, giving a description of the location as well as tourist attractions available there. The evaluation questions relate to the description of the destination and the attractions. We ensure that details pertain to made-up destinations so that no background knowledge that the LLM may have can be used to answer the questions regarding the destination. 
The prompts we use are show in Figure~\ref{fig:prompts}.

\begin{figure*}[t]
\begin{center}
\hspace{-2.5cm}
\includegraphics[scale=0.8]{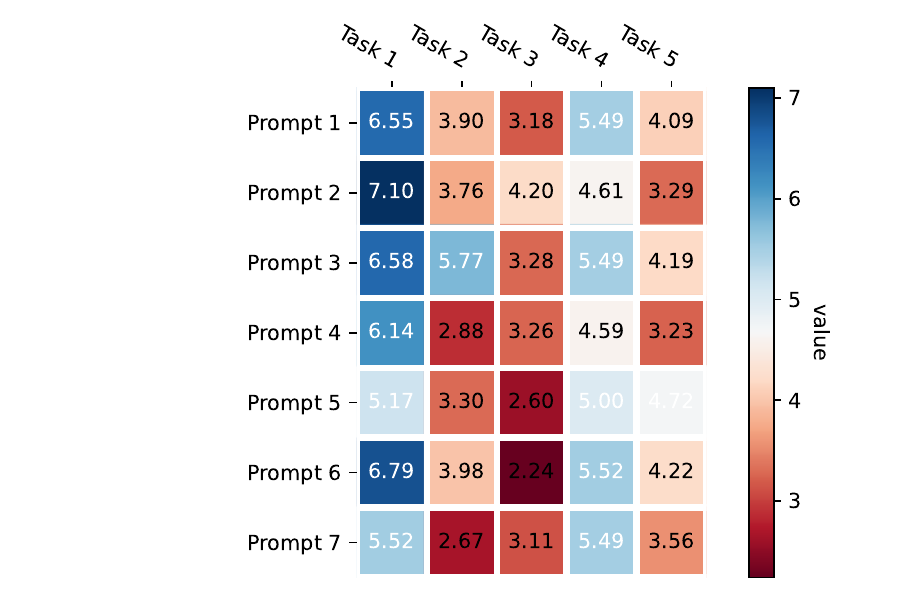}
\caption{Agent vs task matrix: scores achieved for different prompts across tasks. \label{fig:tourist_lms}}
\end{center}
\end{figure*}

An LLM performs the summarization tasks given guidance provided by a textual prompt. We then use our VasE to select the best prompt for the task at hand; in this domain then, a specific prompt corresponds to an agent in the AvT setting. Our score for a generated summary depends on both the number of questions that can be correctly answered given the summary, and the length of the summary. In our evaluation, we have $t=5$ tasks, each consisting of $n=3$ pieces of text to be summarized, each including $q=3$ questions about the text (i.e. $n=3$ tourist destinations per task, each associated with $q=3$ questions, so each task is scored based on the summary length and the $n \cdot q=9$ questions for the destinations of the task). The score achieved on a piece of text is the number of questions that can be answered given the summary minus a length penalty term (in our evaluation we use a penalty of $0.1 \log (\text{length})$, though clearly many other evaluation options may be used). The total score on a task is the sum of the scores on the $n$ pieces of text that the task is composed of. 
We generate the summary of a piece of text by applying an LLM.
We apply few-shot prompting to generate the summary. In all cases we use the same few-shot examples of texts and summaries, and we consider various prompts by adding a short description.

The matrix of results is shown in Figure~\ref{fig:tourist_lms}: this leads to 5 votes among 7 candidates. Despite the closeness in values, VasE identifies a strong Condorcet winner: prompt 6. VasE results are shown in Table~\ref{tab:tourist_lms_full}.

\begin{table*}[ht!]
\begin{center}
\begin{tabular}{|c|ll|}
\multicolumn{3}{c}{\bf Approval(k=2)}\\
\hline
Rank & Agent & Score\\
\hline
1 & \textsc{6} & 4\\
2 & \textsc{3} & 2\\
3 & \textsc{2} & 2\\
4 & \textsc{7} & 1\\
5 & \textsc{5} & 1\\
6 & \textsc{4} & 0\\
7 & \textsc{1} & 0\\
\hline
\end{tabular}
~~~~~
\begin{tabular}{|c|ll|}
\multicolumn{3}{c}{\bf Borda}\\
\hline
Rank & Agent & Score\\
\hline
1 & \textsc{3} & 22\\
2 & \textsc{6} & 21\\
3 & \textsc{2} & 17\\
4 & \textsc{1} & 17\\
5 & \textsc{5} & 11\\
6 & \textsc{7} & 10\\
7 & \textsc{4} & 7\\
\hline
\end{tabular}
~~~~~
\begin{tabular}{|c|ll|}
\multicolumn{3}{c}{\bf Copeland}\\
\hline
Rank & Agent & Score\\
\hline
1 & \textsc{6} & 6.0\\
2 & \textsc{3} & 5.0\\
3 & \textsc{1} & 4.0\\
4 & \textsc{2} & 3.0\\
5 & \textsc{7} & 1.0\\
6 & \textsc{5} & 1.0\\
7 & \textsc{4} & 1.0\\
\hline
\end{tabular}\\

\vspace{0.5cm}

\begin{tabular}{|c|ll|}
\multicolumn{3}{c}{\bf Kemeny-Young}\\
\hline
Rank & Agent & Score\\
\hline
1 & \textsc{6} & 21\\
2 & \textsc{3} & 20\\
3 & \textsc{1} & 15\\
4 & \textsc{2} & 11\\
5 & \textsc{4} & 5\\
6 & \textsc{7} & 3\\
7 & \textsc{5} & 0\\
\hline
\end{tabular}
~~~
\begin{tabular}{|c|ll|}
\multicolumn{3}{c}{\bf Iterative Maximal Lotteries}\\
\hline
Rank & Agent & Score\\
\hline
1 & \textsc{6} & 5.00\\
2 & \textsc{3} & 4.00\\
3 & \textsc{1} & 3.00\\
4 & \textsc{2} & 2.00\\
5 & \textsc{4} & 0.33\\
6 & \textsc{7} & 0.33\\
7 & \textsc{5} & 0.33\\
\hline
\end{tabular}
~~~
\begin{tabular}{|c|ll|}
\multicolumn{3}{c}{\bf Plurality}\\
\hline
Rank & Agent & Score\\
\hline
1 & \textsc{2} & 2\\
2 & \textsc{6} & 1\\
3 & \textsc{5} & 1\\
4 & \textsc{3} & 1\\
5 & \textsc{7} & 0\\
6 & \textsc{4} & 0\\
7 & \textsc{1} & 0\\
\hline
\end{tabular}\\

\vspace{0.5cm}

\begin{tabular}{|c|ll|}
\multicolumn{3}{c}{\bf Ranked Pairs}\\
\hline
Rank & Agent & Score\\
\hline
1 & \textsc{6} & 45\\
2 & \textsc{3} & 33\\
3 & \textsc{1} & 18\\
4 & \textsc{2} & 8\\
5 & \textsc{7} & 1\\
6 & \textsc{5} & 1\\
7 & \textsc{4} & 0\\
\hline
\end{tabular}
~~~
\begin{tabular}{|c|ll|}
\multicolumn{3}{c}{\bf Schulze}\\
\hline
Rank & Agent & Score\\
\hline
1 & \textsc{6} & 19\\
2 & \textsc{3} & 16\\
3 & \textsc{1} & 12\\
4 & \textsc{2} & 9\\
5 & \textsc{4} & 4\\
6 & \textsc{5} & 2\\
7 & \textsc{7} & 0\\
\hline
\end{tabular}
~~~
\begin{tabular}{|c|ll|}
\multicolumn{3}{c}{\bf STV(num winners = 3)}\\
\hline
Rank & Agent & Score\\
\hline
1 & \textsc{2} & 14.2\\
2 & \textsc{6} & 13.2\\
3 & \textsc{5} & 7.1\\
4 & \textsc{3} & 6.1\\
5 & \textsc{7} & 5.0\\
6 & \textsc{4} & 4.0\\
7 & \textsc{1} & 3.0\\
\hline
\end{tabular}
\end{center}
\caption{VasE results on the prompt engineering for text summarization.\label{tab:tourist_lms_full}}
\end{table*}

All of the Condorcet methods (Copeland, Kemeny-Young, Maximal Lotteries, Schulze, Ranked Pairs) choose the same decisive order for the top four prompts: $6 \succ 3 \succ 1 \succ 2$. The scoring methods vary due to the top-ranked agents differing significantly across tasks, electing $6$, $3$, $2$, $2$ for Approval($k=2)$, Borda, Plurality, and STV, respectively. 

Similarly to Atari, when applying Nash averaging the task player mixes over only two tasks $(T3: 0.56, T5: 0.44)$ resulting in first place tie between prompts 2 and 3, and full rank $2 = 3 \succ 5 \succ 1 \succ 7 \succ 6$. In this case, adversarial task selection results in the strong Condorcet winner being ranked last place by Nash averaging.


\section{Application to HELM}

\subsection{Application to the HELM: Core Scenarios}
\label{app:helm}

We applied VasE to the data from the central repository Holistic Evaluation of Language Models (HELM)~\citep{liang2022holistic}\footnote{Note: the HELM data processed here was retrieved in April 2023.}.\\

Here, we describe the application of VasE to data from a central repository curated and maintained by Stanford's Center for Research on Foundation Models called Holistic Evaluation of Language Models (HELM). In this data source, there are 39 models (agents) evaluated across a 42 different {\it scenarios} in several categories such as question \& answering, information retrieval, toxicity detection, sentiment analysis, knowledge, reasoning, etc., as well as several metrics and taxonomies for analyses.


In this section, we restrict ourselves to the core scenarios~\citep[Section 3]{liang2022holistic}, a broad subset of all scenarios: 37 models, and tasks each correspond to a specific metric (accuracy, calibration error, robustness, fairness, efficiency, bias, toxicity, and summarization metrics) upon which models are judged. For each metric (task), models are sorted by head-to-head win rate across all scenarios pertinent for that metric. VasE identifies {\tt text-davinci-002} as a strong Condorcet winner and is top-ranked by all Condorcet methods as well as Borda, with {\tt text-davinci-003}, {\tt Cohere Command beta (52.4B)}, {\tt TNLG v2 (530B)}, {\tt OPT (175B)}, and {\tt text-ada-001} appearing in the top five models across methods. 

We now highlight customized evaluation using non-uniform task weightings. Suppose, instead of an overall aggregate measure where all metrics are treated equally, the user strongly prefers nontoxic, fair and unbiased models. VasE can produce a cautious choice by counting \eg the toxicity vote five times, and bias and fairness each three times. In this case, there is no longer any Condorcet winners, and {\tt YaLM (100B)} is top-ranked by 7 of 9 voting methods. This is better than simply choosing the best model under the one metric because preferences of other metrics are still accounted for in the final rank. As another example: in the HELM paper, authors cautioned against looking at efficiency in isolation (independent of accuracy). So, VasE can choose an efficient model by counting efficiency six times, accuracy four times, and each of other metrics once. In this case, there is a weak Condorcet winner: {\tt text-curie-001}, while App($k=5$), plurality, and STV now elects {\tt text-ada-001}.



In this case, VasE identifies {\tt text-davinci-002} as a weak Condorcet winner.
The results for the top-10 agents are shown in Tables \ref{tab:helm-core-full1}, \ref{tab:helm-core-full2}, and \ref{tab:helm-core-full3}.

\begin{table*}[ht!]
\begin{center}
\begin{tabular}{|c|ll|}
\multicolumn{3}{c}{\bf Approval(k=5)}\\
\hline
Rank & Agent & Score\\
\hline
1 & {\tt text-davinci-003} & 4\\
2 & {\tt Cohere Command beta (52.4B)} & 4\\
3 & {\tt text-davinci-002} & 3\\
4 & {\tt TNLG v2 (530B)} & 3\\
5 & {\tt Anthropic-LM v4-s3 (52B)} & 3\\
6 & {\tt YaLM (100B)} & 2\\
7 & {\tt text-ada-001} & 2\\
8 & {\tt text-babbage-001} & 2\\
9 & {\tt text-curie-001} & 2\\
10 & {\tt ada (350M)} & 2\\
\hline
\end{tabular}\\ 

\vspace{0.5cm}

\begin{tabular}{|c|ll|}
\multicolumn{3}{c}{\bf Borda}\\
\hline
Rank & Agent & Score\\
\hline
1 & {\tt text-davinci-002} & 232\\
2 & {\tt text-davinci-003} & 213\\
3 & {\tt TNLG v2 (530B)} & 191\\
4 & {\tt text-curie-001} & 189\\
5 & {\tt davinci (175B)} & 182\\
6 & {\tt OPT (175B)} & 180\\
7 & {\tt GPT-J (6B)} & 176\\
8 & {\tt OPT (66B)} & 173\\
9 & {\tt Cohere Command beta (52.4B)} & 173\\
10 & {\tt Cohere Command beta (6.1B)} & 172\\
\hline
\end{tabular}\\

\vspace{0.5cm}

\begin{tabular}{|c|ll|}
\multicolumn{3}{c}{\bf Copeland}\\
\hline
Rank & Agent & Score\\
\hline
1 & {\tt text-davinci-002} & 35.5\\
2 & {\tt text-davinci-003} & 33.0\\
3 & {\tt TNLG v2 (530B)} & 31.5\\
4 & {\tt text-curie-001} & 28.5\\
5 & {\tt OPT (175B)} & 28.0\\
6 & {\tt Cohere Command beta (6.1B)} & 28.0\\
7 & {\tt Cohere Command beta (52.4B)} & 27.5\\
8 & {\tt davinci (175B)} & 27.0\\
9 & {\tt J1-Grande v2 beta (17B)} & 26.0\\
10 & {\tt GPT-J (6B)} & 24.5\\
\hline
\end{tabular}
\end{center}
\caption{VasE methods Approval, Borda, and Copeland on HELM Core Scenarios. \label{tab:helm-core-full1}}
\end{table*}

\begin{table*}
\begin{center}
\begin{tabular}{|c|ll|}
\multicolumn{3}{c}{\bf Iterative Maximal Lotteries}\\
\hline
Rank & Agent & Score\\
\hline
1 & {\tt text-davinci-002} & 14.64\\
2 & {\tt text-curie-001} & 14.36\\
3 & {\tt text-davinci-003} & 13.69\\
4 & {\tt OPT (175B)} & 13.31\\
5 & {\tt Cohere Command beta (52.4B)} & 12.53\\
6 & {\tt GPT-J (6B)} & 12.27\\
7 & {\tt Cohere xlarge v20221108 (52} & 12.15\\
8 & {\tt GPT-NeoX (20B)} & 12.05\\
9 & {\tt TNLG v2 (530B)} & 11.69\\
10 & {\tt YaLM (100B)} & 11.17\\
\hline
\end{tabular}\\

\vspace{0.5cm}

\begin{tabular}{|c|ll|}
\multicolumn{3}{c}{\bf Plurality}\\
\hline
Rank & Agent & Score\\
\hline
1 & {\tt text-ada-001} & 2\\
2 & {\tt Cohere Command beta (52.4B)} & 2\\
3 & {\tt YaLM (100B)} & 1\\
4 & {\tt text-davinci-002} & 1\\
5 & {\tt curie (6.7B)} & 1\\
6 & {\tt TNLG v2 (530B)} & 1\\
7 & {\tt GLM (130B)} & 0\\
8 & {\tt text-babbage-001} & 0\\
9 & {\tt text-curie-001} & 0\\
10 & {\tt text-davinci-003} & 0\\
\hline
\end{tabular}\\

\vspace{0.5cm}

\begin{tabular}{|c|ll|}
\multicolumn{3}{c}{\bf Ranked Pairs}\\
\hline
Rank & Agent & Score\\
\hline
1 & {\tt text-davinci-002} & 1640\\
2 & {\tt text-davinci-003} & 1464\\
3 & {\tt Cohere Command beta (52.4B)} & 1324\\
4 & {\tt TNLG v2 (530B)} & 1262\\
5 & {\tt J1-Grande v2 beta (17B)} & 1158\\
6 & {\tt Cohere Command beta (6.1B)} & 1102\\
7 & {\tt GLM (130B)} & 1026\\
8 & {\tt text-curie-001} & 976\\
9 & {\tt OPT (175B)} & 882\\
10 & {\tt davinci (175B)} & 794\\
\hline
\end{tabular}
\end{center}
\caption{VasE methods IML, Plurality, and Ranked Pairs on HELM Core Scenarios. \label{tab:helm-core-full2}}
\end{table*}

\begin{table*}
\begin{center}
\begin{tabular}{|c|ll|}
\multicolumn{3}{c}{\bf Schulze}\\
\hline
Rank & Agent & Score\\
\hline
1 & {\tt text-davinci-002} & 166\\
2 & {\tt text-davinci-003} & 161\\
3 & {\tt Cohere Command beta (52.4B)} & 156\\
4 & {\tt TNLG v2 (530B)} & 151\\
5 & {\tt J1-Grande v2 beta (17B)} & 145\\
6 & {\tt Luminous Supreme (70B)} & 139\\
7 & {\tt Anthropic-LM v4-s3 (52B)} & 135\\
8 & {\tt BLOOM (176B)} & 131\\
9 & {\tt Cohere xlarge v20221108} & 128\\
10 & {\tt Cohere Command beta (6.1B)} & 125\\
\hline
\end{tabular}\\

\vspace{0.5cm}

\begin{tabular}{|c|ll|}
\multicolumn{3}{c}{\bf STV(num winners = 5)}\\
\hline
Rank & Agent & Score\\
\hline
1 & {\tt text-ada-001} & 74.2\\
2 & {\tt Cohere Command beta (52.4B)} & 73.2\\
3 & {\tt text-davinci-002} & 72.2\\
4 & {\tt YaLM (100B)} & 71.2\\
5 & {\tt curie (6.7B)} & 37.1\\
6 & {\tt TNLG v2 (530B)} & 36.1\\
7 & {\tt GLM (130B)} & 35.0\\
8 & {\tt text-babbage-001} & 34.0\\
9 & {\tt text-curie-001} & 33.0\\
10 & {\tt text-davinci-003} & 32.0\\
\hline
\end{tabular}
\end{center}
\caption{VasE methods Schulze and STV on HELM Core Scenarios. \label{tab:helm-core-full3}}
\end{table*}

\subsection{VasE Applied to All HELM Scenarios}

Our second application of VasE to the HELM data involves $m = 37$ models over $n = 39$ tasks, each corresponding to a scenario, where models are compared via scenario-specific metrics. 

This is a quite different than the application to the core scenarios where the scenarios are carefully selected subset of all scenarios and head-to-head win rates are already provided. Second: the tasks now correspond to all the scenarios available from the results repository (\url{https://crfm.stanford.edu/helm/latest/?groups=1}).

The scenarios and their corresponding domain-specific metric are shown in Table~\ref{tab:helm-metrics}. For this application, we assemble the votes from the first table of each scenario listed on each scenario's page (\eg BoolQ's data is available at \url{https://crfm.stanford.edu/helm/latest/?group=boolq} and the data from first table on the page is taken from \url{https://storage.googleapis.com/crfm-helm-public/benchmark_output/runs/v0.2.2/groups/boolq.json}).

\begin{table}[ht!]
    \centering
    \begin{tabular}{|l|l|}
    \hline
    {\bf Metric}  & {\bf Scenario}  \\
    \hline
    EM   &  boolq \\
         &  hellaswag\\ 
         &  openbookqa\\ 
         &  truthful\_qa\\ 
         &  mmlu\\ 
         &  imdb\\ 
         &  raft\\ 
         &  civil\_comments\\ 
         &  blimp\\ 
         &  wikifact\\ 
         &  babi\_qa\\ 
         &  dyck\_language\\ 
         &  synthetic\_reasoning\\ 
         &  gsm\\ 
         &  legal\_support\\ 
         &  lsat\_qa\\ 
         &  legal\_support\\ 
         &  entity\_data\_imputation\\ 
         &  entity\_matching\\ 
         &  bbq\\ 
    \hline
    F1   & narrative\_qa\\
         & natural\_qa\_closed\_book\\
         & natural\_qa\_openbook\_longans\\
         & quac \\
         & synthetic\_reasoning\_natural\\
    \hline
    RR@10 & msmarco\_regular \\
    \hline
    NDCG@10 & msmarco\_trec \\
    \hline
    ROUGE-2 & summarization\_cnndm\\
            & summarization\_xsum\\
    \hline
    BPB   & ice\\
          & the\_pile\\
          & twitter\_aae\\
          & twitter\_aae\_white\\
    \hline
    Equivalent & math\_regular\\
               & math\_chain\_of\_thought\\
    \hline
    LCS        & copyright\_text\\
    \hline
    Stereotypes (race)  & disinformation\_reiteration\\
    \hline
    Toxic fraction & bold\\
                   & real\_toxicity\_prompts\\
    \hline
    \end{tabular}
    \caption{List of all HELM scenarios categorized by metric used to rank models.}
    \label{tab:helm-metrics}
\end{table}

Several features of VasE are showcased in this second application.
First, the metrics all have numerical values which are {\it not} comparable across tasks.
Second, not all models have been evaluated on each scenario. 
This leads to votes of slightly variable length, but not \eg to arbitrary value choices for missing scores.
In this case, VasE identifies {\tt code-davinci-002} as a strong Condorcet winner. The top-10 models for each method are shown in Tables \ref{tab:helm-all-full1}, \ref{tab:helm-all-full2}, and \ref{tab:helm-all-full3}.

\begin{table*}
\begin{center}
\begin{tabular}{|c|ll|}
\multicolumn{3}{c}{\bf Approval(k=5)}\\
\hline
Rank & Agent & Score\\
\hline
1 & {\tt text-davinci-003} & 28\\
2 & {\tt text-davinci-002} & 23\\
3 & {\tt Cohere Command beta (52B)} & 17\\
4 & {\tt TNLG v2 (530B)} & 16\\
5 & {\tt Anthropic-LM v4-s3 (52B)} & 16\\
6 & {\tt code-davinci-002} & 7\\
7 & {\tt OPT (175B)} & 6\\
8 & {\tt J1-Grande v2 beta (17B)} & 6\\
9 & {\tt GPT-NeoX (20B)} & 6\\
10 & {\tt BLOOM (176B)} & 6\\
\hline
\end{tabular}\\

\vspace{0.5cm}

\begin{tabular}{|c|ll|}
\multicolumn{3}{c}{\bf Borda}\\
\hline
Rank & Agent & Score\\
\hline
1 & {\tt text-davinci-003} & 1079\\
2 & {\tt text-davinci-002} & 1057\\
3 & {\tt TNLG v2 (530B)} & 963\\
4 & {\tt Cohere Command beta (52)} & 945\\
5 & {\tt Anthropic-LM v4-s3 (52B)} & 917\\
6 & {\tt J1-Grande v2 beta (17B)} & 905\\
7 & {\tt Cohere xlarge v20221108 52B} & 811\\
8 & {\tt Cohere xlarge v20220609 52B} & 796\\
9 & {\tt OPT (175B)} & 748\\
10 & {\tt davinci (175B)} & 747\\
\hline
\end{tabular}\\

\vspace{0.5cm}

\begin{tabular}{|c|ll|}
\multicolumn{3}{c}{\bf Copeland}\\
\hline
Rank & Agent & Score\\
\hline
1 & {\tt code-davinci-002} & 38.0\\
2 & {\tt text-davinci-003} & 37.0\\
3 & {\tt text-davinci-002} & 36.0\\
4 & {\tt Cohere Command beta (52.4B)} & 34.5\\
5 & {\tt Anthropic-LM v4-s3 (52B)} & 34.5\\
6 & {\tt TNLG v2 (530B)} & 33.0\\
7 & {\tt J1-Grande v2 beta (17B)} & 32.0\\
8 & {\tt Luminous Supreme (70B)} & 31.0\\
9 & {\tt Cohere xlarge v20221108 52B} & 30.0\\
10 & {\tt Cohere xlarge v20220609 52B} & 29.0\\
\hline
\end{tabular}
\end{center}
\caption{VasE results for Approval, Borda, and Copeland methods on HELM All Scenarios. \label{tab:helm-all-full1}}
\end{table*}

\begin{table*}
\begin{center}
\begin{tabular}{|c|ll|}
\multicolumn{3}{c}{\bf Iterative Maximal Lotteries}\\
\hline
Rank & Agent & Score\\
\hline
1 & {\tt code-davinci-002} & 27.00\\
2 & {\tt text-davinci-003} & 26.00\\
3 & {\tt text-davinci-002} & 25.00\\
4 & {\tt Cohere Command beta (52.4B)} & 23.52\\
5 & {\tt Anthropic-LM v4-s3 (52B)} & 23.48\\
6 & {\tt TNLG v2 (530B)} & 23.00\\
7 & {\tt J1-Grande v2 beta (17B)} & 22.00\\
8 & {\tt Luminous Supreme (70B)} & 21.00\\
9 & {\tt Cohere xlarge v20221108 52B} & 20.00\\
10 & {\tt Cohere xlarge v20220609 52B} & 19.00\\
\hline
\end{tabular}\\

\vspace{0.5cm}

\begin{tabular}{|c|ll|}
\multicolumn{3}{c}{\bf Plurality}\\
\hline
Rank & Agent & Score\\
\hline
1 & {\tt text-davinci-003} & 11\\
2 & {\tt text-davinci-002} & 5\\
3 & {\tt code-davinci-002} & 5\\
4 & {\tt Cohere Command beta 52B} & 4\\
5 & {\tt OPT (175B)} & 2\\
6 & {\tt Anthropic-LM v4-s3 52B} & 2\\
7 & {\tt davinci (175B)} & 1\\
8 & {\tt babbage (1.3B)} & 1\\
9 & {\tt TNLG v2 (530B)} & 1\\
10 & {\tt T0pp (11B)} & 1\\
\hline
\end{tabular} \\

\vspace{0.5cm}

\begin{tabular}{|c|ll|}
\multicolumn{3}{c}{\bf Ranked Pairs}\\
\hline
Rank & Agent & Score\\
\hline
1 & {\tt code-davinci-002} & 11544\\
2 & {\tt text-davinci-003} & 11395\\
3 & {\tt text-davinci-002} & 10472\\
4 & {\tt Cohere Command beta 52B} & 9581\\
5 & {\tt Anthropic-LM v4-s3 52B} & 8888\\
6 & {\tt TNLG v2 (530B)} & 8202\\
7 & {\tt J1-Grande v2 beta (17B)} & 7451\\
8 & {\tt Luminous Supreme (70B)} & 6784\\
9 & {\tt Cohere xlarge v20221108 52B} & 6258\\
10 & {\tt Cohere xlarge v20220609 52B} & 5748\\
\hline
\end{tabular}
\end{center}
\caption{VasE results for IML, Plurality, and Ranked Pairs methods on HELM All Scenarios. \label{tab:helm-all-full2}}
\end{table*}

\begin{table*}
\begin{center}
\begin{tabular}{|c|ll|}
\multicolumn{3}{c}{\bf Schulze}\\
\hline
Rank & Agent & Score\\
\hline
1 & {\tt code-davinci-002} & 681\\
2 & {\tt text-davinci-003} & 676\\
3 & {\tt text-davinci-002} & 653\\
4 & {\tt Anthropic-LM v4-s3 (52B)} & 627\\
5 & {\tt Cohere Command beta 52B} & 610\\
6 & {\tt TNLG v2 (530B)} & 591\\
7 & {\tt J1-Grande v2 beta (17B)} & 566\\
8 & {\tt Luminous Supreme (70B)} & 551\\
9 & {\tt Cohere xlarge v20221108 52B} & 535\\
10 & {\tt Cohere xlarge v20220609 52B} & 514\\
\hline
\end{tabular}\\

\vspace{0.5cm}

\begin{tabular}{|c|ll|}
\multicolumn{3}{c}{\bf STV(num winners = 5)}\\
\hline
Rank & Agent & Score\\
\hline
1 & {\tt text-davinci-003} & 78.11\\
2 & {\tt text-davinci-002} & 77.7\\
3 & {\tt Cohere Command beta (52.4B)} & 76.7\\
4 & {\tt code-davinci-002} & 75.7\\
5 & {\tt OPT (175B)} & 74.9\\
6 & {\tt babbage (1.3B)} & 39.3\\
7 & {\tt Anthropic-LM v4-s3 (52B)} & 38.2\\
8 & {\tt davinci (175B)} & 37.1\\
9 & {\tt TNLG v2 (530B)} & 36.1\\
10 & {\tt T0pp (11B)} & 35.1\\
\hline
\end{tabular}
\end{center}
\caption{VasE results for Schulze and STV methods on HELM All Scenarios. \label{tab:helm-all-full3}}
\end{table*}

\end{document}